\newtheorem{proposition}{Proposition}[section]
\newlist{inlinelist}{enumerate*}{1}
\setlist*[inlinelist,1]{%
	label=(\roman*),
}
\newcommand{\article}{paper\xspace}
\newcommand{\pd}{p_\text{data}}
\newcommand{\pg}{p_\text{gen}}
\newcommand{\pz}{p_{z}}
\newcommand{\qg}{q_\text{gen}}
\DeclareMathOperator*{\E}{\mathbb{E}}
\DeclareMathOperator{\R}{\mathbb{R}}
\DeclareMathOperator*{\argmax}{\arg\max}
\DeclareMathOperator*{\argmin}{\arg\min}
\title{Calibrating Energy-based Generative Adversarial Networks}
\author{
	Zihang Dai$^1$, Amjad Almahairi$^2$\thanks{Part of this work was completed while author was at Maluuba Research}, Philip Bachman$^3$, Eduard Hovy$^1$ \& Aaron Courville$^2$\\
    $^1$ Language Technologies Institute, Carnegie Mellon University.\\
    $^2$ MILA,  Universit\'e de Montr\'eal.\\
    $^3$ Maluuba Research.
}
\begin{document}
\maketitle
	
\begin{abstract}
In this paper we propose equipping Generative Adversarial Networks with the ability to produce direct energy estimates for samples.
Specifically, we develop a flexible adversarial training framework, and prove this framework not only ensures the generator converges to the true data distribution, but also enables the discriminator to retain the density information at the global optimum.
We derive the analytic form of the induced solution, and analyze its properties.
In order to make the proposed framework trainable in practice, we introduce two effective approximation techniques.
Empirically, the experiment results closely match our theoretical analysis, verifying that the discriminator is able to recover the energy of data distribution.
\end{abstract}

%!TEX root = main.tex
\section{Introduction}
\label{sec:introduction}

Generative Adversarial Networks (GANs)~\citep{goodfellow2014generative} represent an important milestone on the path towards more effective generative models.
GANs cast generative model training as a minimax game between a generative network (\emph{generator}), which maps a random vector into the data space, and a discriminative network (\emph{discriminator}), whose objective is to distinguish generated samples from real samples. Multiple researchers~\cite{radford2015unsupervised,salimans2016improved,zhao2016energy} have shown that the adversarial interaction with the discriminator can result in a generator that produces compelling samples. 
% Specifically, in the original form of GAN~\citep{goodfellow2014generative}, the discriminator is designed to output the probability of a sample being real, and can be trained by binary cross entropy criterion.
%Intuitively, this adversarial training scheme is a minimax game between the generator and the discriminator. 
The empirical successes of the GAN framework were also supported by the theoretical analysis of \citeauthor{goodfellow2014generative}, who showed that,
under certain conditions, the distribution produced by the generator converges to the true data distribution, while
%at the global optimum (i.e.~the equilibrium of the minimax game). 
the discriminator converges to a degenerate uniform solution.
%-- always predicting $0.5$ probability of being real for both real and generated samples.
%Essentially, when generator distribution matches true data distributions, generated samples are indistinguishable from real ones.
%More recently, \citeauthor{nowozin2016f} generalize the GAN framework to arbitrary $f$-divergences ($f$-GAN), and show that, regardless of the $f$-divergence used, the generator distribution consistently matches the true data distribution at the global optimum. The discriminator, however, will always converge to a degenerate solution, though the specific form depends on the choice of the $f$-divergence.

%Despite the richness and firm theoretical grounding of the $f$-GAN family, 

While GANs have excelled as compelling sample generators, their use as general purpose probabilistic generative models has been limited by the difficulty in using them to provide density estimates or even unnormalized energy values for sample evaluation. 

It is tempting to consider the GAN discriminator as a candidate for providing this sort of scoring function. Conceptually, it is a trainable sample evaluation mechanism that -- owing to GAN training paradigm -- could be closely calibrated to the distribution modeled by the generator. 
If the discriminator could retain fine-grained information of the relative quality of samples, measured for instance by probability density or unnormalized energy, it could be used as an evaluation metric.
Such data-driven evaluators would be highly desirable for problems where it is difficult to define evaluation criteria that correlate well with human judgment.
%Conceptually, the GAN discriminator can be seen as a trainable sample evaluation mechanism and it is tempting to consider its use as a scoring function. 
Indeed, the real-valued discriminator of the recently introduced energy-based GANs~\cite{zhao2016energy} might seem like an ideal candidate energy function.
Unfortunately, as we will show, the degenerate fate of the GAN discriminator at the optimum equally afflicts the energy-based GAN of \citeauthor{zhao2016energy}.

%While GANs demonstrate tremendous promise as a learning paradigm, the degenerate fate of the discriminator at the optimum is unsatisfactory.
%Conceptually, the discriminator can be seen as a trainable evaluator, which scores or measures the soundness of different samples (real or generated).
%If the discriminator could retain fine-grained information of the relative quality of samples, measured for instance by probability density or unnormalized energy, it could used as an evaluation metric.
%Such data-driven evaluators are highly desirable for problems where it is difficult to define evaluation criteria that correlate well with human judgment.

In this paper we consider the questions:
\begin{inlinelist}
	\item does there exists an adversarial framework that induces a non-degenerate discriminator, and
	\item if so, what form will the resulting discriminator take? 
\end{inlinelist}
We introduce a novel adversarial learning formulation, which leads to a non-degenerate discriminator while ensuring the generator distribution matches the data distribution at the global optimum. 
We derive a general analytic form of the optimal discriminator, and discuss its properties and their relationship to the specific form of the training objective.
%that are dependent on or independent of the specific form of the training objective.
We also discuss the connection between the proposed formulation and existing alternatives such as the approach of~\cite{kim2016deep}.
Finally, for a specific instantiation of the general formulation, we investigate two approximation techniques to optimize the training objective, and verify our results empirically.

\section{Related Work}
\label{sec:related_work}

Following a similar motivation, the field of Inverse Reinforcement Learning (IRL)~\citep{ng2000algorithms} has been exploring ways to recover the ``intrinsic'' reward function (analogous to the discriminator) from observed expert trajectories (real samples).
Taking this idea one step further, apprenticeship learning or imitation learning~\citep{abbeel2004apprenticeship,ziebart2008maximum} aims at learning a policy (analogous to the generator) using the reward signals recovered by IRL.
Notably, \citeauthor{ho2016generative} draw a connection between imitation learning and GAN by showing that the GAN formulation can be derived by imposing a specific regularization on the reward function.
Also, under a special case of their formulation, \citeauthor{ho2016generative} provide a duality-based interpretation of the problem, which inspires our theoretical analysis.
However, as the focus of \citep{ho2016generative} is only on the policy, the authors explicitly propose to bypass the intermediate IRL step, and thus provide no analysis of the learned reward function.
% What's more, the regularized GAN criterion in \citep{ho2016generative} can actually lead to a biased estimator (generator) of the true data distribution.

The GAN models most closely related to our proposed framework are energy-based GAN models of \citet{zhao2016energy} and \citet{kim2016deep}. In the next section, We show how one can derive both of these approaches from different assumptions regarding regularization of the generative model. 
%Meanwhile, \citeauthor{kim2016deep} propose training an undirected energy-based model to approximate the data distribution using negative samples drawn from a concurrently trained directed generative model, rather than sampling them with an MCMC procedure.
%To ensure the generated negative samples are representative of the data distribution, the directed generator is trained to approach the energy-based distribution.
%Again, this resembles adversarial training.The energy-based model acts as the discriminator, and is trained to assign lower energies to better samples and vice versa. Then, the expected energy level of the generated samples provides a natural signal for training the directed generator. More recently, the idea of using an energy function as the discriminator is explicitly considered by \citeauthor{zhao2016energy} with the notion of energy-based GAN (EBGAN).
%However, as we will show, the training objective of EBGAN is unable to ensure that the discriminator recovers a useful or coherent energy function.

\section{Alternative Formulation of Adversarial Training}
\label{sec:formulation}

\subsection{Background}
Before presenting the proposed formulation, we first state some basic assumptions required by the analysis, and introduce notations used throughout the paper.

Following the original work on GANs~\citep{goodfellow2014generative}, our analysis focuses on the non-parametric case, where all models are assumed to have infinite capacities.
While many of the non-parametric intuitions can directly transfer to the parametric case, we will point out cases where this transfer fails.
We assume a finite data space throughout the analysis, to avoid technical machinery out of the scope of this paper.
Our results, however, can be extended to continuous data spaces, and our experiments are indeed performed on continuous data.

Let $\mathcal{X}$ be the data space under consideration, and $\mathcal{P} = \{ p \mid p(x) \geq 0, \forall x \in \mathcal{X}, \sum_{x \in \mathcal{X}} p(x) = 1 \}$ be the set of all proper distributions defined on $\mathcal{X}$.
Then, $\pd \in \mathcal{P}: \mathcal{X} \mapsto \R$ and $\pg \in \mathcal{P}: \mathcal{X} \mapsto \R$ will denote the true data distribution and the generator distribution.
$\E_{x \sim p} f(x)$ denotes the expectation of the quantity $f(x)$ w.r.t.~$x$ drawn from $p$.
% We use $\R_+$ and $\R_{++}$ to denote non-negative and strictly positive real values.
Finally, the term ``discriminator'' will refer to any structure that provides training signals to the generator based on some measure of difference between the generator distribution and the real data distribution, which which includes but is not limited to $f$-divergence.

%%%% Formulation and Analysis
\subsection{Proposed Formulation}
In order to understand the motivation of the proposed approach, it is helpful to analyze the optimization dynamics near convergence in GANs first.

When the generator distribution matches the data distribution, 
the training signal (gradient) w.r.t. the discriminator vanishes.
At this point, assume the discriminator still retains density information, and views some samples as more real and others as less.
This discriminator will produce a training signal (gradient) w.r.t. the generator, pushing the generator to generate samples that appear more real to the discriminator.
Critically, this training signal is the sole driver of the generator's training. 
Hence, the generator distribution will diverge from the data distribution.
In other words, as long as the discriminator retains relative density information, the generator distribution cannot stably match the data distribution.
Thus, in order to keep the generator stationary as the data distribution, the discriminator must assign flat (exactly the same) density to all samples at the optimal. 

From the analysis above, the fundamental difficulty is that the generator only receives a single training signal (gradient) from the discriminator, which it has to follow.
To keep the generator stationary, this single training signal (gradient) must vanish, which requires a degenerate discriminator. % Amjad: I feel here that we give an impression that we only add a regularizer for the original GAN formulation. Hence, I made the following modifs.
In this work, we propose to tackle this single training signal constraint directly.
Specifically, we introduce a novel adversarial learning formulation which incorporates an additional training signal to the generator, such that this additional signal can 
%we consider providing an additional training signal to the generator, such that this additional signal can 
\begin{itemize}
	\item balance (cancel out) the discriminator signal at the optimum, so that the generator can stay stationary even if the discriminator assigns non-flat density to samples
	\item cooperate with the discriminator signal to make sure the generator converges to the data distribution, and the discriminator retains the \textit{correct} relative density information 
\end{itemize} 

% In this section, we present and analyze a novel adversarial learning formulation which incorporates an additional training signal to the generator.

The proposed formulation can be written as the following minimax training objective, 
%Based on these guidelines, we propose the following adversarial learning formulation with an additional training signal to the generator,
\begin{equation}
\label{eq:general_objective}
\max_{c} \min_{\pg \in \mathcal{P}} \quad
	\E_{x \sim \pg} \big[ c(x) \big] - 
    \E_{x \sim \pd} \big[ c(x) \big] + K(\pg),
\end{equation}
where $c(x): \mathcal{X} \mapsto \R$ is the discriminator that assigns each data point an unbounded scalar cost, and $K(\pg): \mathcal{P} \mapsto \R$ is some (functionally) differentiable, convex function of $\pg$.
Compared to the original GAN, despite the similar minimax surface form, the proposed fomulation has two crucial distinctions. 

Firstly, while the GAN discriminator tries to distinguish ``fake'' samples from real ones using binary classification, the proposed discriminator achieves that by assigning lower cost to real samples and higher cost to ``fake'' one.
This distinction can be seen from the first two terms of Eqn. \eqref{eq:general_objective}, where the discriminator $c(x)$ is trained to widen the expected cost gap between ``fake'' and real samples, while
the generator is adversarially trained to minimize it. % Amjad: should we mention here that this is so far simialr to Energy-based GAN? 
In addition to the different adversarial mechanism, a calibrating term $K(\pg)$ is introduced to provide a countervailing source of training signal for $\pg$ as we motivated above.
For now, the form of $K(\pg)$ has not been specified. But as we will see later, its choice will directly decide the form of the optimal discriminator $c^*(x)$.
%Amjad: maybe it would be stronger here to say that these two modifications are both required to get proper energy estimates.

% Similar to the original GAN, the formulation has a minimax optimization objective, and can be understood as an augmented adversarial training criterion.
% Specifically, the first two terms are analogous to the original GAN criterion, which compute the divergence between $\pg$ and $\pd$ based on the expected cost assigned by the trainable discriminator.
% Effectively, this part is performing adversarial training. The second part $K(\pg)$ can be understood as an augmentation to the pure adversarial training, which provides a countervailing source of training signal for pgen.
% For now, the form of $K(\pg)$ has not been specified. But as we will see later, its choice will directly decide the form of the optimal discriminator $c^*(x)$.

With the specific optimization objective, we next provide theoretical characterization of both the generator and the discriminator at the global optimum. 

Define $L(\pg, c) = \E_{x \sim \pg} \big[ c(x) \big] - \E_{x \sim \pd} \big[ c(x) \big] + K(\pg)$, then $L(\pg, c)$ is the Lagrange dual function of the following optimization problem
\begin{equation}
\label{eq:constrained_primal_problem}
\begin{aligned}
	\min_{\pg \in \mathcal{P}}&\quad  K(\pg) \\
	\text{s.t.}&\quad \pg(x) - \pd(x) = 0, \forall x \in \mathcal{X}
\end{aligned}
\end{equation}
where $c(x), \forall x$ appears in $L(\pg, c)$ as the dual variables introduced for the equality constraints.
This duality relationship has been observed previously in \citep[equation (7)]{ho2016generative} under the adversarial imitation learning setting.
However, in their case, the focus was fully on the generator side (induced policy), and no analysis was provided for the discriminator (reward function).

In order to characterize $c^*$, we first expand the set constraint on $\pg$ into explicit equality and inequality constraints:
\begin{equation}
\label{eq:primal_problem}
\begin{aligned}
	\min_{\pg}
    	&\quad K(\pg) \\
	\text{s.t.}
   		&\quad \pg(x) - \pd(x) = 0, \forall x \\
		&\quad -\pg(x) \leq 0, \forall x \\
		&\quad \sum_{x \in \mathcal{X}} \pg(x) - 1 = 0.
\end{aligned}
\end{equation}

Notice that $K(\pg)$ is a convex function of $\pg(x)$ by definition, and both the equality and inequality constraints are affine functions of $\pg(x)$.
Thus, problem \eqref{eq:constrained_primal_problem} is a convex optimization problem.
What's more, since 
\begin{inlinelist}
\item $\text{dom}_K$ is open, and
\item there exists a feasible solution $\pg = \pd$ to \eqref{eq:primal_problem},
\end{inlinelist}
by the refined Slater's condition~\citep[page~226]{boyd2004convex}, 
we can further verify that strong duality holds for \eqref{eq:primal_problem}.
With strong duality, a typical approach to characterizing the optimal solution is to apply the Karush-Kuhn-Tucker (KKT) conditions, which gives rise to this theorem:

% general form of the optimal discriminator
\begin{proposition}
\label{thm:general_optimal_disc}

By the KKT conditions of the convex problem \eqref{eq:primal_problem}, at the global optimum, the optimal generator distribution $\pg^*$ matches the true data distribution $\pd$, and the optimal discriminator $c^*(x)$ has the following form:
\begin{equation}
\label{eq:optimal_disc}
\begin{aligned}
c^*(x) &= -\frac{\partial K(\pg)}{\partial \pg(x)}\bigg|_{\pg=\pd} - \lambda^* + \mu^*(x), \forall x \in \mathcal{X}, \\
\text{where}
	&\quad \mu^*(x) =  
		\begin{cases}
		0, & \pd(x) > 0 \\
		u_x, & \pd(x) = 0
		\end{cases}, \\
	&\quad \lambda^* \in \R,\text{ is an under-determined real number independent of } x, \\
	&\quad u_x \in \R_+,\text{ is an under-determined non-negative real number.}
\end{aligned}
\end{equation}
	
\end{proposition}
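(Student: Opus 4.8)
The plan is to run the standard KKT machinery on the convex program \eqref{eq:primal_problem}. Since $K$ is convex and (functionally) differentiable, the objective is convex and differentiable; the three families of constraints are affine; and strong duality has already been established via the refined Slater condition \citep[page~226]{boyd2004convex}. Hence the KKT conditions are both necessary \emph{and} sufficient for a primal--dual pair to be globally optimal, so it suffices to write the conditions down and solve them.

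First I would dispose of the primal side: the equality constraints $\pg(x) - \pd(x) = 0$ for every $x \in \mathcal{X}$ already pin down $\pg = \pd$, and $\pd$ is feasible (it is non-negative and normalized), so the feasible set is the singleton $\{\pd\}$ and therefore $\pg^* = \pd$. All remaining work concerns the dual variables. I would form the Lagrangian of \eqref{eq:primal_problem},
\[
\mathcal{L}(\pg, c, \mu, \lambda) = K(\pg) + \sum_{x} c(x)\big(\pg(x) - \pd(x)\big) - \sum_{x} \mu(x)\,\pg(x) + \lambda\Big(\textstyle\sum_{x}\pg(x) - 1\Big),
\]
with $c(x) \in \R$ attached to $\pg(x) - \pd(x) = 0$, $\mu(x) \geq 0$ attached to $-\pg(x) \leq 0$, and $\lambda \in \R$ attached to $\sum_x \pg(x) - 1 = 0$.

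Next, the stationarity condition $\partial \mathcal{L}/\partial \pg(x) = 0$, evaluated at the optimum $\pg = \pd$, reads $\partial K(\pg)/\partial \pg(x)\big|_{\pg=\pd} + c^*(x) - \mu^*(x) + \lambda^* = 0$, which rearranges to exactly the claimed expression for $c^*(x)$ in \eqref{eq:optimal_disc}. Complementary slackness gives $\mu^*(x)\,\pg^*(x) = 0$; since $\pg^* = \pd$, this forces $\mu^*(x) = 0$ on $\{x : \pd(x) > 0\}$, while on $\{x : \pd(x) = 0\}$ the only surviving requirement is dual feasibility $\mu^*(x) \geq 0$, so there $\mu^*(x) = u_x$ may be any non-negative real --- matching the case split in \eqref{eq:optimal_disc}.

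What remains --- and what I expect to be the most delicate point --- is justifying that $\lambda^*$ (and hence $c^*$ up to an additive constant) is genuinely left free rather than determined. Here I would observe that the normalization constraint $\sum_x \pg(x) = 1$ is linearly dependent on the equalities $\pg(x) = \pd(x)$, because $\sum_x \pd(x) = 1$; a redundant constraint forces the multiplier vector to be non-unique. Concretely, if $(c^*, \mu^*, \lambda^*)$ satisfies all KKT conditions then so does $(c^* - t\mathbf{1},\, \mu^*,\, \lambda^* + t)$ for every $t \in \R$ --- stationarity is invariant under this shift and the other three conditions are untouched --- so $\lambda^*$ ranges over all of $\R$. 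Packaging stationarity, the two feasibility conditions, and complementary slackness then yields precisely \eqref{eq:optimal_disc} with the stated under-determined parameters $\lambda^*$ and $\{u_x\}$; the only subtlety beyond bookkeeping is to invoke sufficiency (not merely necessity) of KKT, so that each such triple is a true global optimum rather than just a candidate.
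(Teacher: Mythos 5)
Your proposal is correct and follows essentially the same route as the paper's proof: form the Lagrangian with multipliers $c$, $\mu$, $\lambda$ for the three constraint families, write down the KKT conditions (valid by convexity and the refined Slater condition), and rearrange stationarity at $\pg^* = \pd$ to obtain \eqref{eq:optimal_disc}. Your added observation that $\lambda^*$ is free because the normalization constraint is linearly dependent on the equality constraints (via $\sum_x \pd(x) = 1$) is a worthwhile refinement the paper leaves implicit, but it does not change the argument.
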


The detailed proof of proposition \ref{thm:general_optimal_disc} is provided in appendix \ref{sec:proof_for_formulation}. 
From \eqref{eq:optimal_disc}, we can see the exact form of the optimal discriminator depends on the term $K(\pg)$, or more specifically its gradient.
But, before we instantiate $K(\pg)$ with specific choices and show the corresponding forms of $c^*(x)$, we first discuss some general properties of $c^*(x)$ that do not depend on the choice of $K$.

\textbf{Weak Support Discriminator.} As part of the optimal discriminator function, the term $\mu^*(x)$ plays the role of support discriminator. 
That is, it tries to distinguish the support of the data distribution, i.e.~$\textsc{supp}(\pd) = \{x \in \mathcal{X} \mid \pd(x) > 0\}$, from its complement set with zero-probability, i.e.~$\textsc{supp}(\pd)^\complement = \{x \in \mathcal{X} \mid \pd(x) = 0\}$.
Specifically, for any $x \in \textsc{supp}(\pd)$ and $x^\prime \in \textsc{supp}(\pd)^\complement$, it is guaranteed that $\mu^*(x) \leq \mu^*(x^\prime)$.
However, because $\mu^*(\cdot)$ is under-determined, there is nothing preventing the inequality from degenerating into an equality.
Therefore, we name it the \textit{weak} support discriminator.
But, in all cases, $\mu^*(\cdot)$ assigns zero cost to all data points within the support.
As a result, it does not possess any fine-grained density information inside of the data support.
It is worth pointing out that, in the parametric case, because of the smoothness and the generalization properties of the parametric model, the learned discriminator may generalize beyond the data support.

\textbf{Global Bias.} In \eqref{eq:optimal_disc}, the term $\lambda^*$ is a scalar value shared for all $x$.
As a result, it does not affect the relative cost among data points, and only serves as a global bias for the discriminator function.

Having discussed general properties, we now consider some specific cases of the convex function $K$, and analyze the resulting optimal discriminator $c^*(x)$ in detail.
\begin{enumerate}[leftmargin=16pt,labelindent=16pt]
% entropy case
\item First, let us consider the case where $K$ is the negative entropy of the generator distribution, i.e. $K(\pg) = -H(\pg)$. 
Taking the derivative of the negative entropy w.r.t. $\pg(x)$, we have 
\begin{equation}
\label{eq:optimal_disc_entropy}
\begin{aligned}
	c_\text{ent}^*(x) &= -\log \pd(x) - 1 - \lambda^* + \mu^*(x), \forall x \in \mathcal{X}, \\
\end{aligned}
\end{equation}
where $\mu^*(x)$ and $\lambda^*$ have the same definitions as in \eqref{eq:optimal_disc}.

Up to a constant, this form of $c_\text{ent}^*(x)$ is exactly the energy function of the data distribution $\pd(x)$.
This elegant result has deep connections to several existing formulations, which include max-entropy imitation learning~\citep{ziebart2008maximum} and the directed-generator-trained energy-based model~\citep{kim2016deep}.
The core difference is that these previous formulations are originally derived from maximum-likelihood estimation, and thus the minimax optimization is only implicit.
In contrast, with an explicit minimax formulation we can develop a better understanding of the induced solution.
For example, the global bias $\lambda^*$ suggests that there exists more than one stable equilibrium the optimal discriminator can actually reach.
Further, $\mu^*(x)$ can be understood as a support discriminator that poses extra cost on generator samples which fall in zero-probability regions of data space.
% These meanings can be seen more clearly if we try to normalize the $c_\text{ent}^*(x)$ into a proper probability density by
% \begin{equation}
% \label{eq:entropy_normalize}
% \pd(x) = \frac
% {\exp\Big( - \beta \big(c_\text{ent}^*(x) - \mu^*(x)\big) \Big)}
% {\sum_{x' \in \mathcal{X}} \exp\Big( - \beta \big(c_\text{ent}^*(x') - \mu^*(x')\big) \Big)}
% \end{equation}
% where $\beta$ is the temperature parameter.
% In the normalization \eqref{eq:entropy_normalize}, while the global bias $\lambda^*$ automatically cancels out, the $\mu^*(x)$ in $\textsc{supp}(\pd)^\complement$ must be manually subtracted.

% L2 case
\item When $K(\pg) = \frac{1}{2}\sum_{x \in \mathcal{X}} \pg(x)^2 = \frac{1}{2}\|\pg\|_2^2$, which can be understood as posing $\ell_2$ regularization on $\pg$, we have $\frac{\partial K(\pg)}{\partial \pg(x)}\big|_{\pg=\pd} = \pd(x)$, and it follows
\begin{equation}
\label{eq:optimal_disc_l2}
	c_{\ell_2}^*(x) = - \pd(x) - \lambda^* + \mu^*(x), \forall x \in \mathcal{X},
\end{equation}
with $\mu^*(x), \lambda^*$ similarly defined as in \eqref{eq:optimal_disc}.

Surprisingly, the result suggests that the optimal discriminator $c_{\ell_2}^*(x)$ directly recovers the negative probability $-\pd(x)$, shifted by a constant.
Thus, similar to the entropy solution \eqref{eq:optimal_disc_entropy}, it fully retains the relative density information of data points within the support.

However, because of the under-determined term $\mu^*(x)$, we cannot recover the distribution density $\pd$ exactly from either $c_{\ell_2}^*$ or $c_\text{ent}^*$ if the data support is finite.
Whether this ambiguity can be resolved is beyond the scope of this \article, but poses an interesting research problem. 

% In comparison, the discriminative power of $c^*_{\ell_2}$ is \textbf{linear} w.r.t. $\pd(x)$, while the discriminant power of $c^*_\text{ent}$ is \textbf{log-linear} w.r.t. $\pd(x)$.
% Because of the additional under-determined terms, the log scaled $c_\text{ent}^*$ can be numerically more stable, especially when $\lambda^*$ has a large magnitude.
% On the other hand, $c_{\ell_2}^*$ is less sensitive to low values of $\pd(x)$, making it appealing in practice when we only have limited amount of data, and the empirical distribution has many such low-value areas.

\item Finally, let's consider consider a degenerate case, where $K(\pg)$ is a constant. 
That is, we don’t provide
any additional training signal for pgen at all. With $K(\pg) = \text{const}$, we simply have
\begin{equation}
c_\text{cst}^*(x) = −\lambda^* + \mu^*(x), \forall x \in \mathcal{X}, 
\end{equation}
whose discriminative power is fully controlled by the weak support discriminator $\mu^*(x)$. 
Thus, it follows that $c_\text{cst}^*(x)$ won't be able to discriminate data points within the support of $\pd$, and its power to distinguish data from $\textsc{supp}(\pd)$ and $\textsc{supp}(\pd)^\complement$ is weak.
This closely matches the intuitive argument in the beginning of this section.

% We summarize this observation in the following corollary.
% \begin{corollary}
% \label{thm:cst}
% Under the formulation \eqref{eq:general_objective}, without any regularization on $\pg$, i.e., if $K(\pg)$ is a constant, the optimal discriminator $c_\text{cst}^*$ does not retain relative density information inside the data support.
% \end{corollary}

% The implication of Corollary \ref{thm:cst} is inspiring --- it is the regularization on $\pg$ that effectively enables $c$ to be meaningful at the optimal.
% We will carefully discuss this implication in the section \ref{sec:interpretation}. 
% For now, we draw a connection between this case and the EBGAN formulation~\citep{zhao2016energy}.

Note that when $K(\pg)$ is a constant, the objective function \eqref{eq:general_objective} simplifies to: 
\begin{equation}
\label{eq:constant_objective}
\max_{c} \min_{\pg \in \mathcal{P}} \quad
	\E_{x \sim \pg} \big[ c(x) \big] - 
    \E_{x \sim \pd} \big[ c(x) \big],
\end{equation}
which is very similar to the EBGAN objective~\citep[equation (2) and (4)]{zhao2016energy}.
As we show in appendix \ref{sec:ebgan_proof}, compared to the objective in \eqref{eq:constant_objective}, the EBGAN objective puts extra constraints on the allowed discriminator function.
In spite of that, the EBGAN objective suffers from the single-training-signal problem and does not guarantee that the discriminator will recover the real energy function (see appendix \ref{sec:ebgan_proof} for detailed analysis).
% Since our proposed formulation \eqref{eq:general_objective} can be viewed as putting extra constraints on the generator with the $K(\pg)$ term.
% This suggests that while directly regularizing the discriminator may not make itself meaningful at the optimal, regularizing the generator distribution can effectively achieve that.
\end{enumerate}

As we finish the theoretical analysis of the proposed formulation, we want to point out that simply adding the same term $K(\pg)$ to the original GAN formulation will not lead to both a generator that matches the data distribution, and a discriminator that retains the density information (see appendix \ref{sec:fgan_reg} for detailed analysis).

\section{Parametric Instantiation with Entropy Approximation}
\label{sec:instantiation}
While the discussion in previous sections focused on the non-parametric case, in practice we are limited to a finite amount of data, and the actual problem involves high dimensional continuous spaces.
Thus, we resort to parametric representations for both the generator and the discriminator.
In order to train the generator using standard back-propagation, we do not parametrize the generator distribution directly. Instead, we parametrize a directed generator network that transforms random noise $z\sim \pz(z)$ to samples from a continuous data space $\R^{n}$.
Consequently, we don't have analytical access to the generator distribution, which is defined implicitly by the generator network's noise$\rightarrow$data mapping.
However, the regularization term $K(\pg)$ in the training objective~\eqref{eq:general_objective} requires the generator distribution.
Faced with this problem, we focus on the max-entropy formulation, and exploit two different approximations of the regularization term $K(\pg)=-H(\pg)$.

\subsection{Nearest-Neighbor Entropy Gradient Approximation}
The first proposed solution is built upon an intuitive interpretation of the entropy gradient.
Firstly, since we construct $\pg$ by applying a deterministic, differentiable transform $g_{\theta}$ to samples $z$ from a fixed distribution $\pz$, we can write the gradient of $H(\pg)$ with respect to the generator parameters $\theta$ as follows:
\begin{equation}
-\nabla_{\theta} H(\pg) = \mathbb{E}_{z \sim p_z} \left[ \nabla_{\theta} \log \pg(g_{\theta}(z)) \right] = \mathbb{E}_{z \sim p_z} \left[ \frac{\partial g_{\theta}(z)}{\partial \theta} \frac{\partial \log \pg(g_{\theta}(z))}{\partial g_{\theta}(z)} \right],
\label{eq:ent_grad_wrt_gen_params}
\end{equation}
where the first equality relies on the ``reparametrization trick''. 
Equation ~\ref{eq:ent_grad_wrt_gen_params} implies that, if we can compute the gradient of the generator log-density $\log \pg(x)$ w.r.t. any $x = g_{\theta}(z)$, then we can directly construct the Monte-Carlo estimation of the entropy gradient $\nabla_{\theta} H(\pg)$ using samples from the generator.

Intuitively, for any generated data $x = g_{\theta}(z)$, the term $\frac{\partial \log \pg(x)}{\partial x}$ essentially describes the direction of \textit{local change}
%\footnote{Gradient only provides local geometric information.}
in the sample space that will increase the log-density.
Motivated by this intuition, we propose to form a local Gaussian approximation $\pg^i$ of $\pg$ around each point $x_i$ in a batch of samples $\{x_1, ..., x_n\}$ from the generator, and then compute the gradient $\frac{\partial \log \pg(x_i)}{\partial x_i}$ based on the Gaussian approximation. 
Specifically, each local Gaussian approximation $\pg^i$ is formed by finding the $k$ nearest neighbors of $x_i$ in the batch $\{x_1, ..., x_n\}$, and then placing an isotropic Gaussian distribution at their mean (i.e.~maximimum likelihood).
Based on the isotropic Gaussian approximation, the resulting gradient has the following form
\begin{equation}
\label{eq:knn_approx}
\frac{\partial \log \pg(x_i)}{\partial x_i} \approx \mu_i - x_i, \quad \text{ where } \mu_i = \frac{1}{k} \sum_{x' \in \text{KNN}(x_i)} x' \text{ is the mean of the Gaussian}
\end{equation}
Finally, note the scale of this gradient approximation may not be reliable.
To fix this problem, we normalize the approximated gradient into unit norm, and use a single hyper-parameter to model the scale for all $x$, leading to the following entropy gradient approximation
\begin{equation}
-\nabla_{\theta} H(\pg) \approx \alpha \frac{1}{k} \sum_{x_i = g_{\theta}(z_i)} \frac{\mu_i - x_i}{\|\mu_i - x_i\|_2}
\end{equation}
where $\alpha$ is the hyper-parameter and $\mu_i$ is defined as in equation \eqref{eq:knn_approx}.

An obvious weakness of this approximation is that it relies on Euclidean distance to find the $k$ nearest neighbors.
However, Euclidean distance is usually not the proper metric to use when the effective dimension is very high.
As the problem is highly challenging, we leave it for future work.

\subsection{Variational Lower bound on the Entropy}
Another approach we consider relies on defining and maximizing a variational lower bound on the entropy $H(\pg(x))$ of the generator distribution.
We can define the joint distribution over observed data and the noise variables as $\pg(x,z)=\pg(x\mid z)\pg(z)$, where simply $\pg(z)=\pz(z)$ is a fixed prior. Using the joint, we can also define the marginal $\pg(x)$ and the posterior $\pg(z\mid x)$.
We can also write the mutual information between the observed data and noise variables as:
\begin{equation}
\begin{aligned}
I(\pg(x);\pg(z)) &= H(\pg(x))-H(\pg(x\mid z))\\
&= H(\pg(z))-H(\pg(z\mid x)),
\end{aligned}
\end{equation}
where $H(\pg(.\mid .))$ denotes the conditional entropy. By reorganizing terms in this definition, we can write the entropy $H(\pg(x))$ as:
\begin{equation}
H(\pg(x))=H(\pg(z))-H(\pg(z\mid x))+H(\pg(x\mid z))
\end{equation}
We can think of $\pg (x\mid z)$ as a peaked Gaussian with a fixed, diagonal covariance, and hence its conditional entropy is constant and can be dropped. Furthermore, $H(\pg(z))$ is also assumed to be fixed a priori. Hence, we can maximize $H(\pg(x))$ by minimizing the conditional entropy:
\begin{equation}
H(\pg(z\mid x))=\mathbb{E}_{x\sim \pg (x)}\left[\mathbb{E}_{z\sim \pg(z\mid x)}\left[-\log \pg(z\mid x)\right]\right]
\end{equation}
Optimizing this term is still problematic, because
\begin{inlinelist}
\item we do not have access to the posterior $\pg(z\mid x)$, and
\item we cannot sample from it.
\end{inlinelist}~Therefore, we instead minimize a variational upper bound defined by an approximate posterior $\qg(z\mid x)$:
\begin{equation}
\begin{aligned}
H(\pg(z\mid x)) &=  \mathbb{E}_{x\sim \pg(x)}\left[\mathbb{E}_{z\sim \pg(z\mid x)}\left[-\log \qg(z\mid x)\right] - 
\textrm{KL}(\pg(z\mid x)\| \qg(z\mid x))\right] \\
&\leq  \mathbb{E}_{x\sim \pg(x)}\left[\mathbb{E}_{z\sim \pg(z\mid x)}\left[-\log \qg(z\mid x)\right]\right] \\
&= \mathcal{U}(\qg).
\end{aligned}
\end{equation}
We can also rewrite the variational upper bound as:
\begin{equation}
\mathcal{U}(\qg) = \mathbb{E}_{x,z\sim \pg(x,z)}\left[-\log \qg(z\mid x)\right]=\mathbb{E}_{z\sim \pg(z)}\left[\mathbb{E}_{x\sim \pg(x\mid z)}\left[-\log \qg(z\mid x)\right]\right],
\end{equation}
which can be optimized efficiently with standard back-propagation and Monte Carlo integration of the relevant expectations based on independent samples drawn from the joint $\pg(x, z)$. By minimizing this upper bound on the conditional entropy
$H(\pg(z\mid x))$, we are effectively maximizing a variational lower bound on the entropy $H(\pg(x))$.
% * <aaron.courville@gmail.com> 2016-10-12T10:25:14.616Z:
% 
% need much more here. How can it be trained with standard backprop. Mention that this is an instance of amortized inference, etc. 
% 
% Also, this math seems wrong. We need the expectations w.r.t. q not p(z|x) (we don't have this).
% 
% Also how is an equality describing an upper bound?
% 
% ^.
\section{Experiments}
\label{sec:experiments}
In this section, we verify our theoretical results empirically on several synthetic and real datasets.
In particular, we evaluate whether the discriminator obtained from the entropy-regularized adversarial training can capture the density information (in the form of energy), while making sure the generator distribution matches the data distribution. 
For convenience, we refer to the obtained models as EGAN-Ent.
Our experimental setting follows closely recommendations from~\cite{radford2015unsupervised}, except in Sec.~\ref{sec:synthetic_exp} where we use fully-connected models (see appendix \ref{sec:experiment_setting} for details).~\footnote{For more details, please refer to \url{https://github.com/zihangdai/cegan_iclr2017}.}

% To suit the distinct nature of various datasets, different evaluation metrics have been adapted, which we will describe separately in each experiment.

% We focus on the entropy-regularized formulation of our proposed Energy GAN (EGAN-Ent) using the two proposed approximations in
% Section~\ref{sec:instantiation}, namely, the nearest-neighbor based approximation (EGAN-Ent-NN) and the variational-inference based approximation (EGAN-Ent-VI). 
% We compare our models with both the normal GAN, and the Energy GAN without any regularization (EGAN-Const).

\subsection{Synthetic low-dimensional data}
\label{sec:synthetic_exp}
First, we consider three synthetic datasets in $2$-dimensional space,
which are drawn from the following distributions:
\begin{inlinelist}
\item Mixture of 4 Gaussians with equal mixture weights,
\item Mixture of 200 Gaussians arranged as two spirals (100 components each spiral), and 
\item Mixture of 2 Gaussians with highly biased mixture weights, $P(c_1) = 0.9, P(c_2) = 0.1$.
\end{inlinelist}
We visualize the ground-truth energy of these distributions along with 100K training samples in Figure \ref{fig:synthetic_distributions}.
\begin{figure}[t!]
	\centering
	\begin{subfigure}{0.325\textwidth}
		\includegraphics[width=\textwidth]{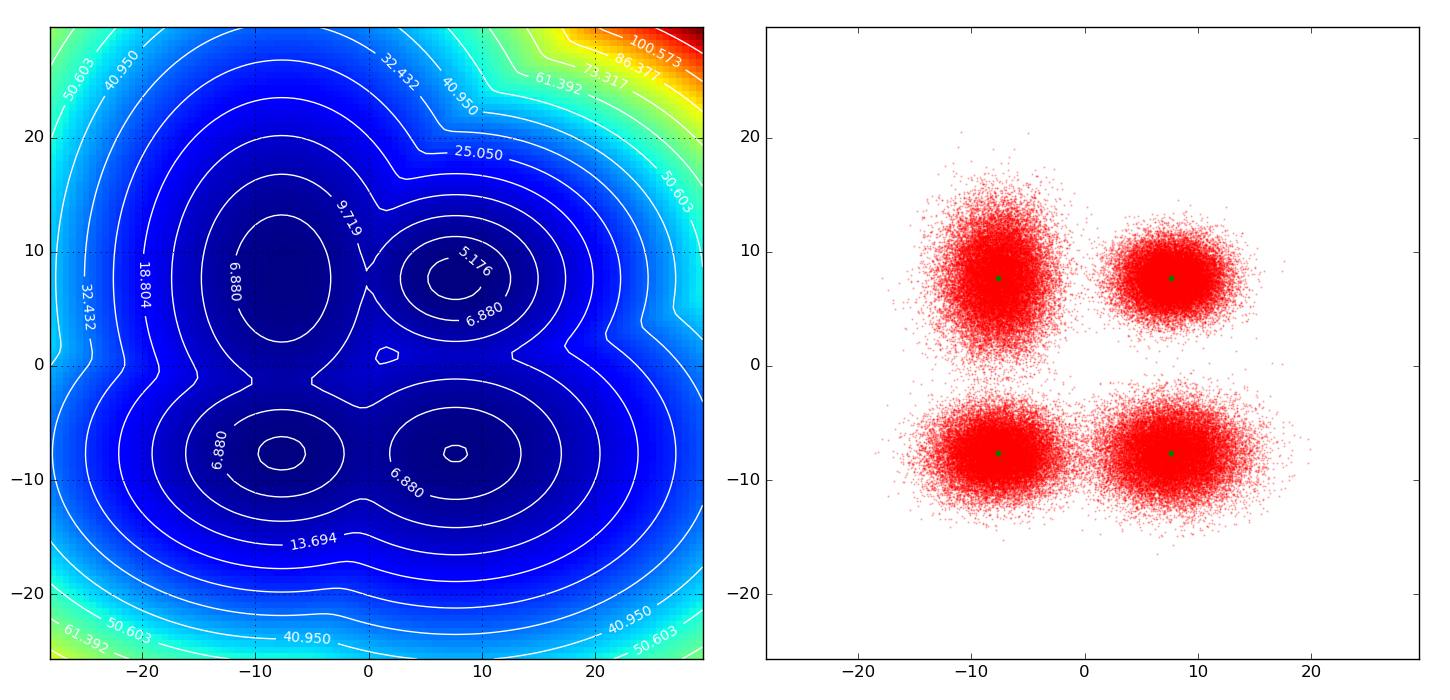}
	\end{subfigure}
    \begin{subfigure}{0.325\textwidth}
		\includegraphics[width=\textwidth]{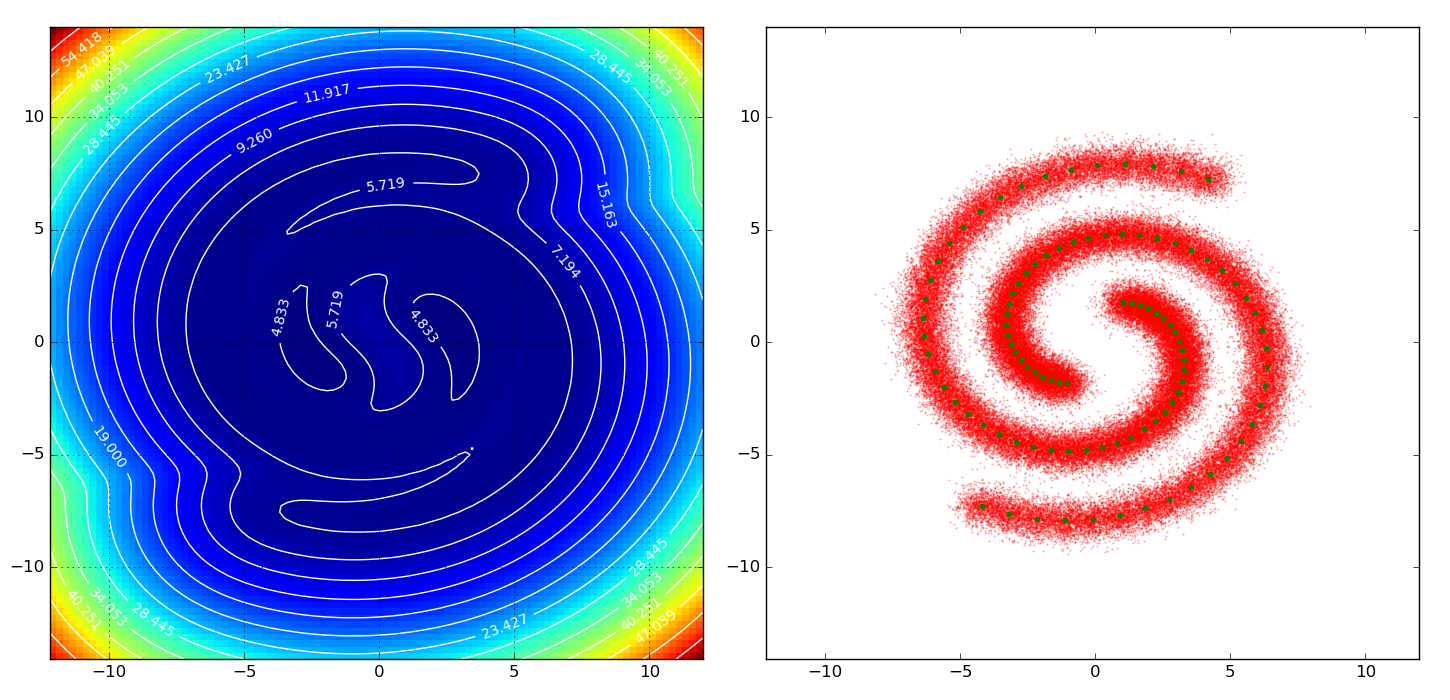}
	\end{subfigure}
    \begin{subfigure}{0.325\textwidth}
		\includegraphics[width=\textwidth]{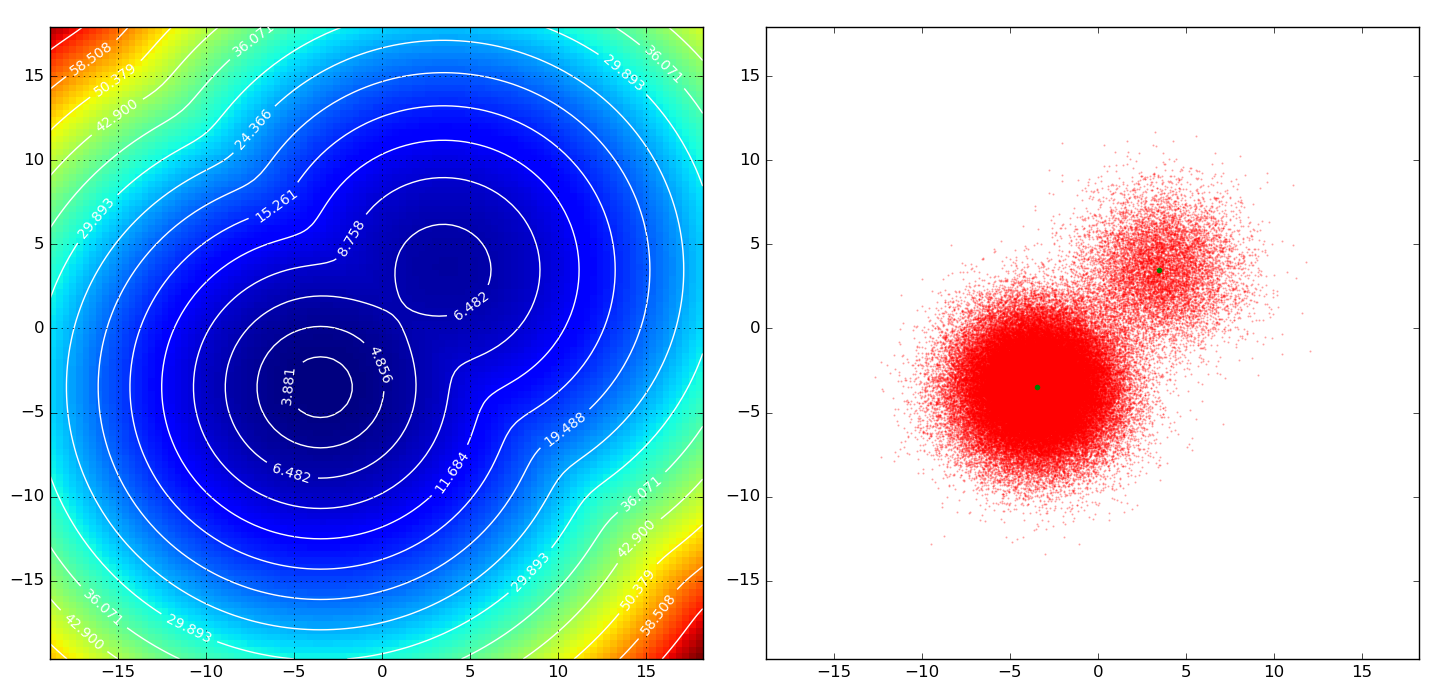}
	\end{subfigure}
	\caption{True energy functions and samples from synthetic distributions. Green dots in the sample plots indicate the mean of each Gaussian component.}
	\label{fig:synthetic_distributions}
\vspace{-0.6em}
\end{figure}
Since the data lies in $2$-dimensional space, we can easily visualize both the learned generator (by drawing samples) and the discriminator for direct comparison and evaluation. 
We evaluate here our EGAN-Ent model using both approximations: the nearest-neighbor based approximation (EGAN-Ent-NN) and the variational-inference based approximation (EGAN-Ent-VI), and compare them with two baselines: the original GAN and the energy based GAN with no regularization (EGAN-Const).

Experiment results are summarized in Figure \ref{fig:synthetic_result_1} for baseline models, and Figure \ref{fig:synthetic_result_2} for the proposed models. 
As we can see, all four models can generate perfect samples. 
However, for the discriminator, both GAN and EGAN-Const lead to degenerate solution, assigning flat energy inside the empirical data support.
In comparison, EGAN-Ent-VI and EGAN-Ent-NN clearly capture the density information, though to different degrees.
Specifically, on the equally weighted Gaussian mixture and the two-spiral mixture datasets, EGAN-Ent-NN tends to give more accurate and fine-grained solutions compared to EGAN-Ent-VI.
However, on the biased weighted Gaussian mixture dataset, EGAN-Ent-VI actually fails to captures the correct mixture weights of the two modes, incorrectly assigning lower energy to the mode with lower probability (smaller weight).
In contrast, EGAN-Ent-NN perfectly captures the bias in mixture weight, and obtains a contour very close to the ground truth.
\begin{figure}[t!]
	\centering
	\begin{subfigure}{\textwidth}
		\includegraphics[width=0.32\textwidth]{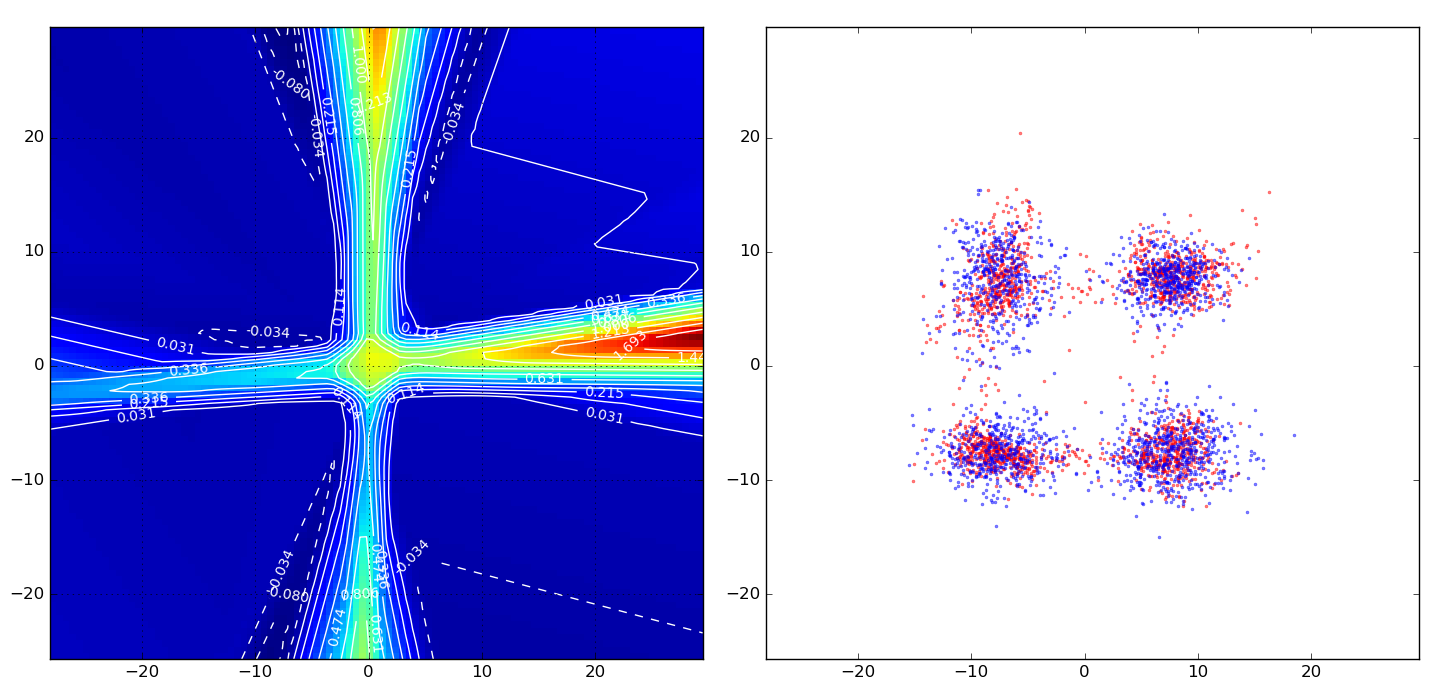}
		\includegraphics[width=0.32\textwidth]{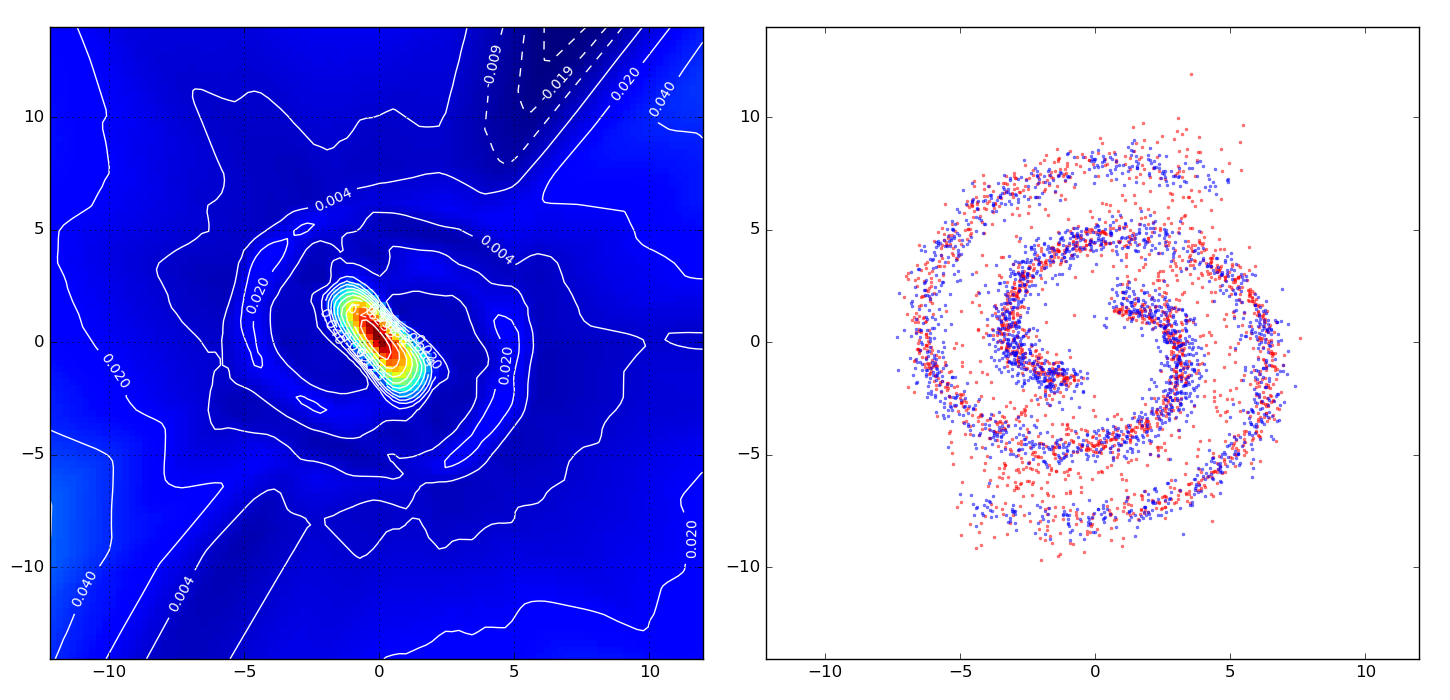}
		\includegraphics[width=0.32\textwidth]{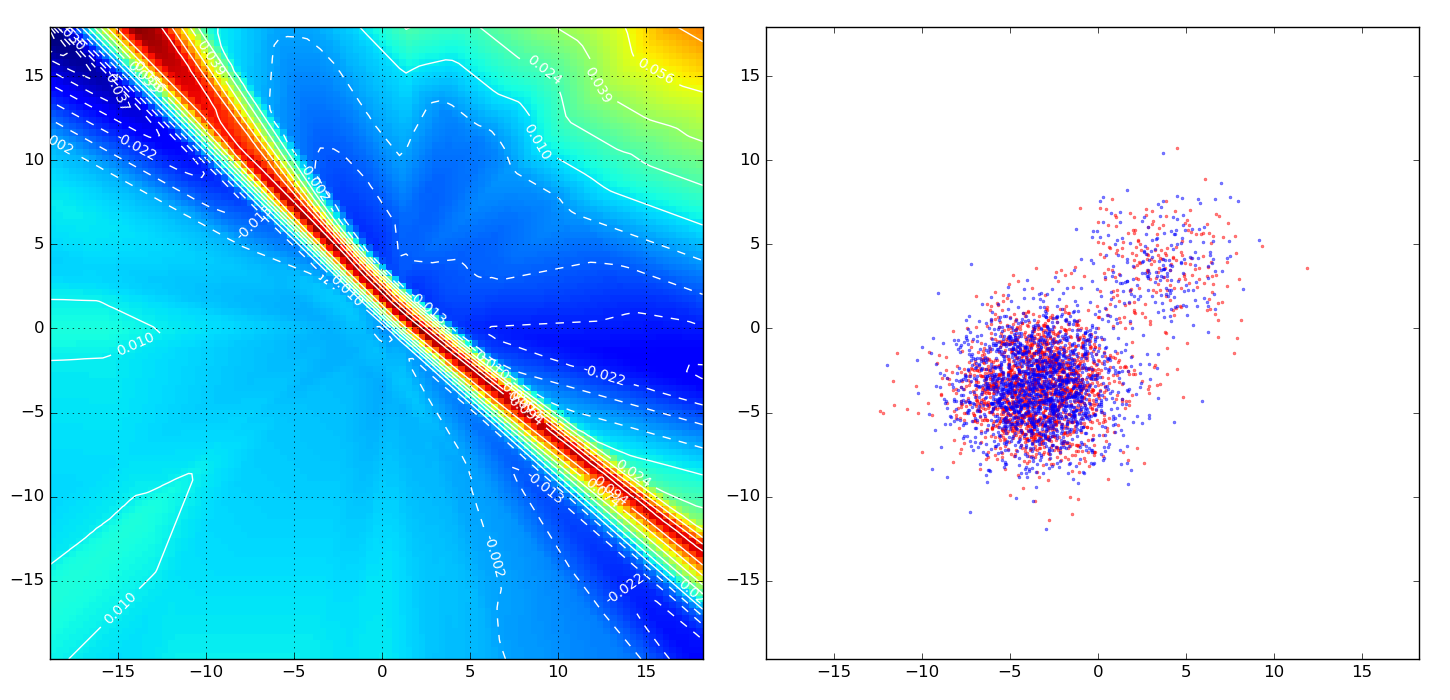}
        \caption{Standard GAN}
	\end{subfigure}
	\begin{subfigure}{\textwidth}
		\includegraphics[width=0.32\textwidth]{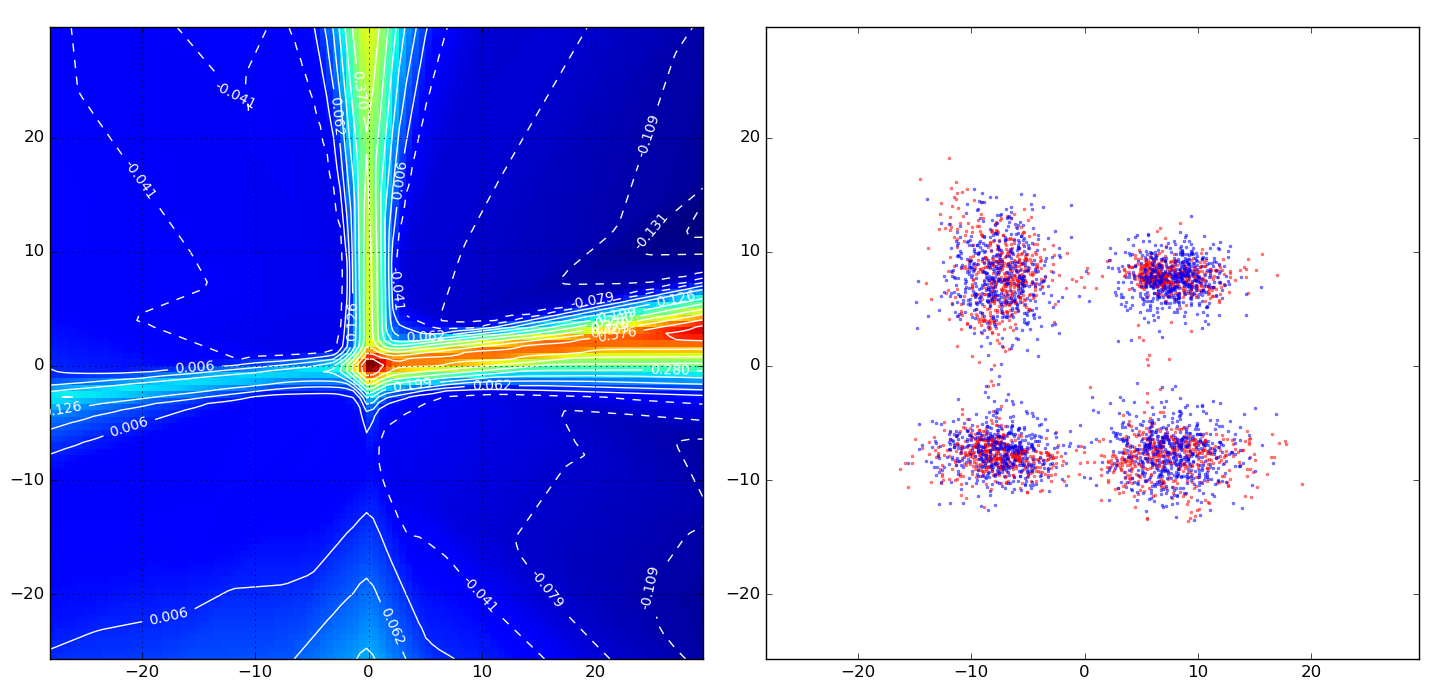}
		\includegraphics[width=0.32\textwidth]{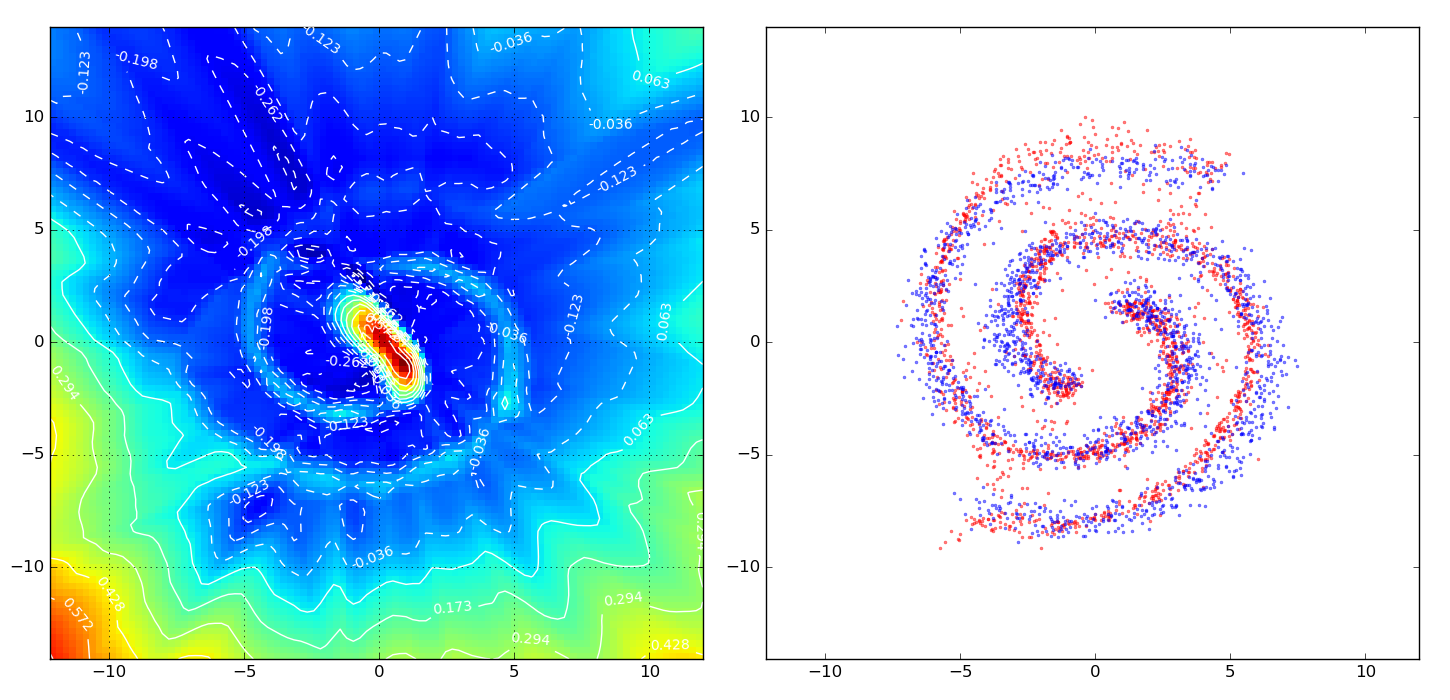}
		\includegraphics[width=0.32\textwidth]{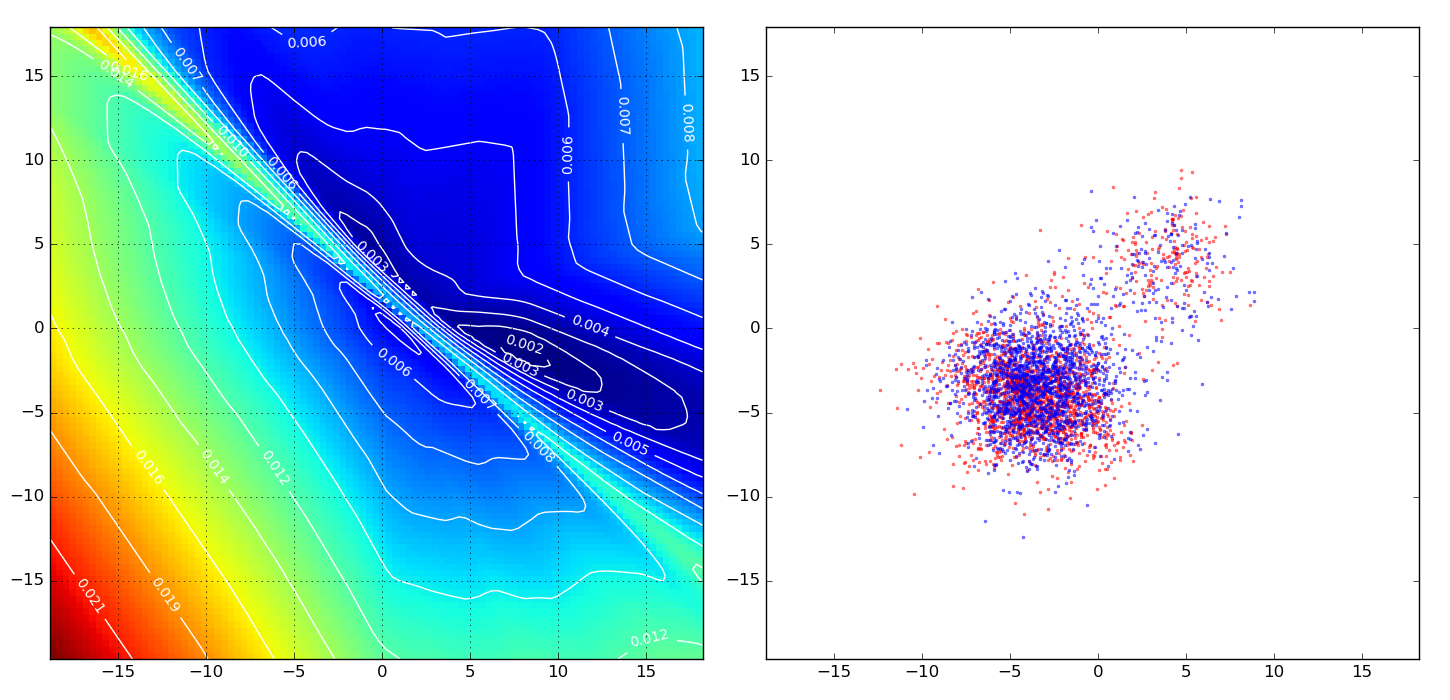}
        \caption{Energy GAN without regularization (EGAN-Const)}
	\end{subfigure}
    \caption{Learned energies and samples from baseline models whose discriminator cannot retain density information at the optimal. In the sample plots, blue dots indicate generated samples, and red dots indicate real ones.}
    \label{fig:synthetic_result_1}
\vspace{-0.6em}
\end{figure}
\begin{figure}[h]
    \begin{subfigure}{\textwidth}
		\includegraphics[width=0.32\textwidth]{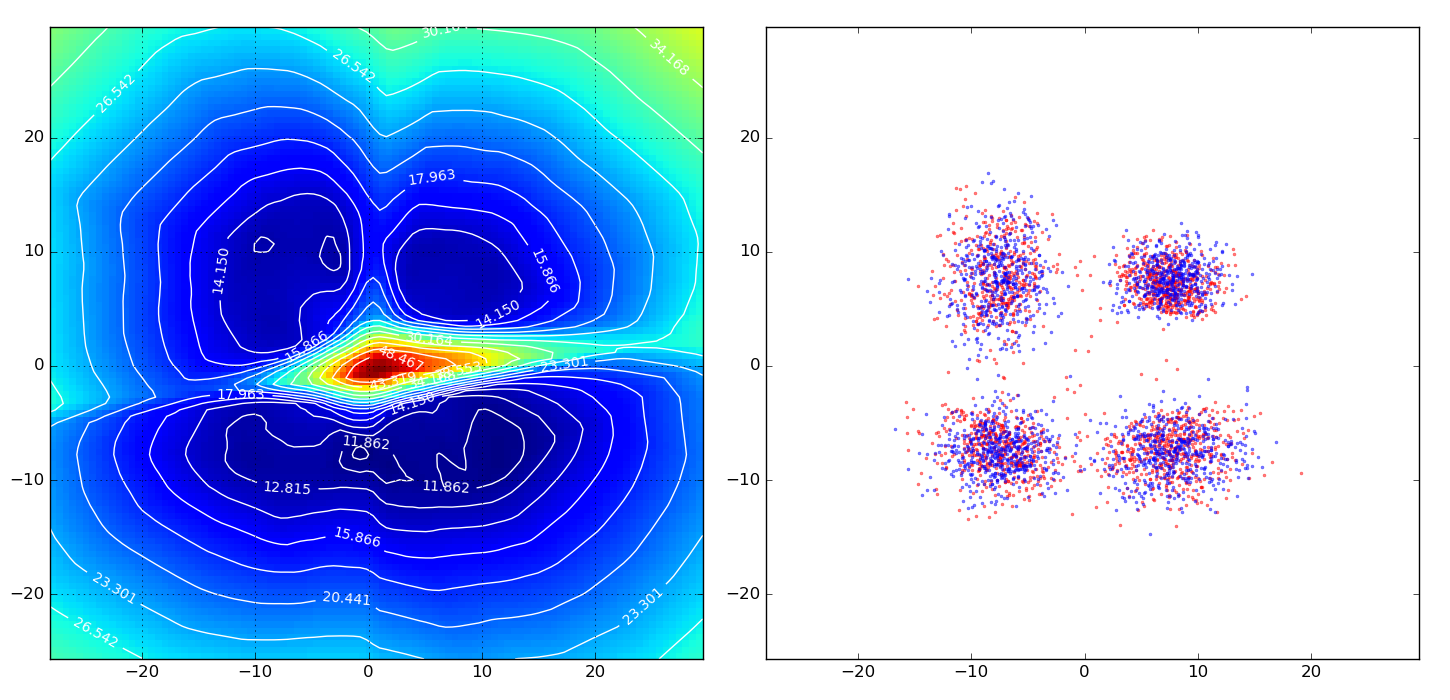}
		\includegraphics[width=0.32\textwidth]{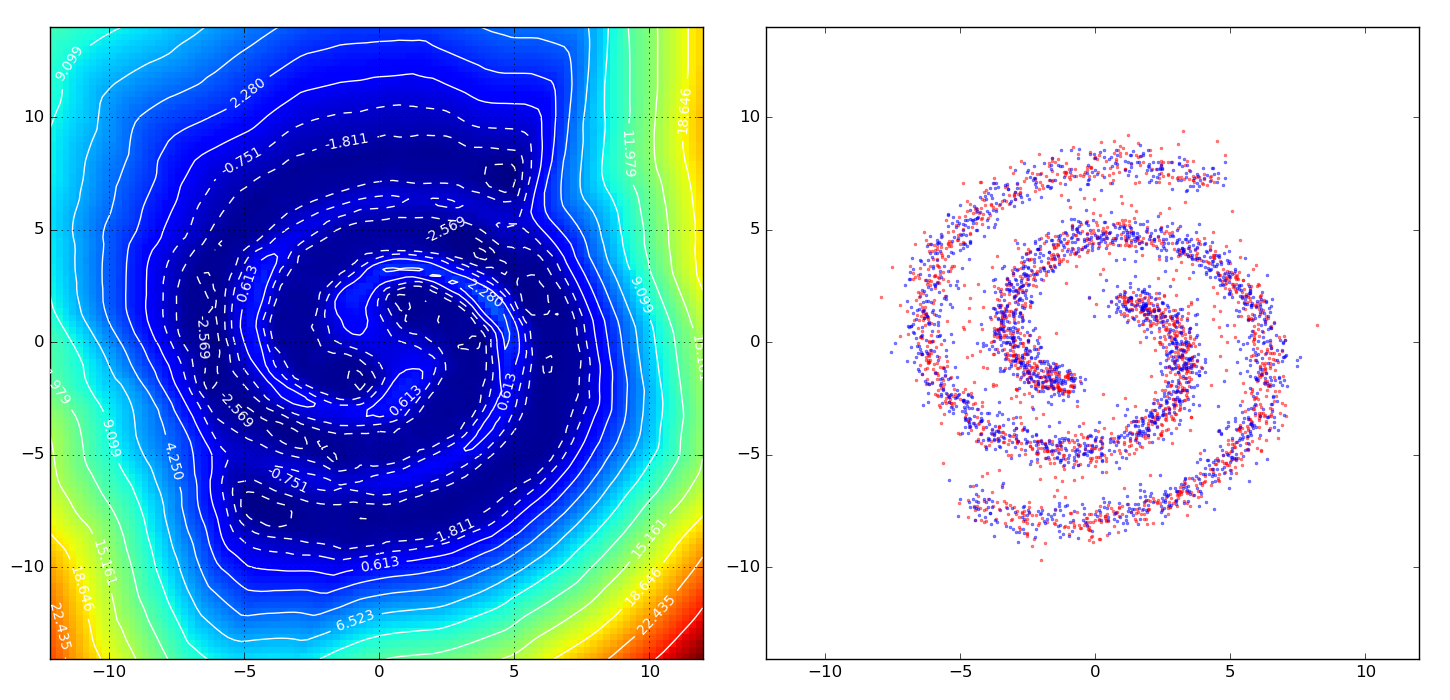}
		\includegraphics[width=0.32\textwidth]{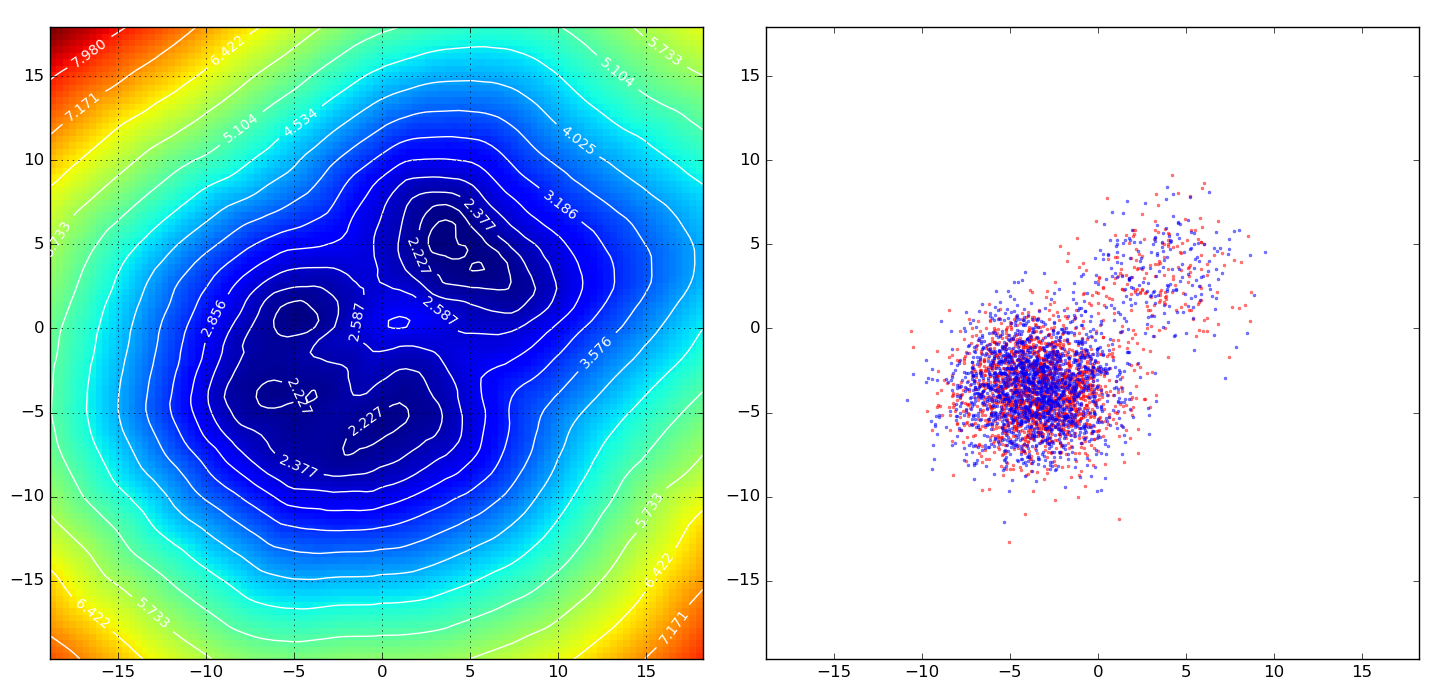}
        \caption{Entropy regularized Energy GAN with variational inference approximation (EGAN-Ent-VI)}
	\end{subfigure}
    \begin{subfigure}{\textwidth}
		\includegraphics[width=0.32\textwidth]{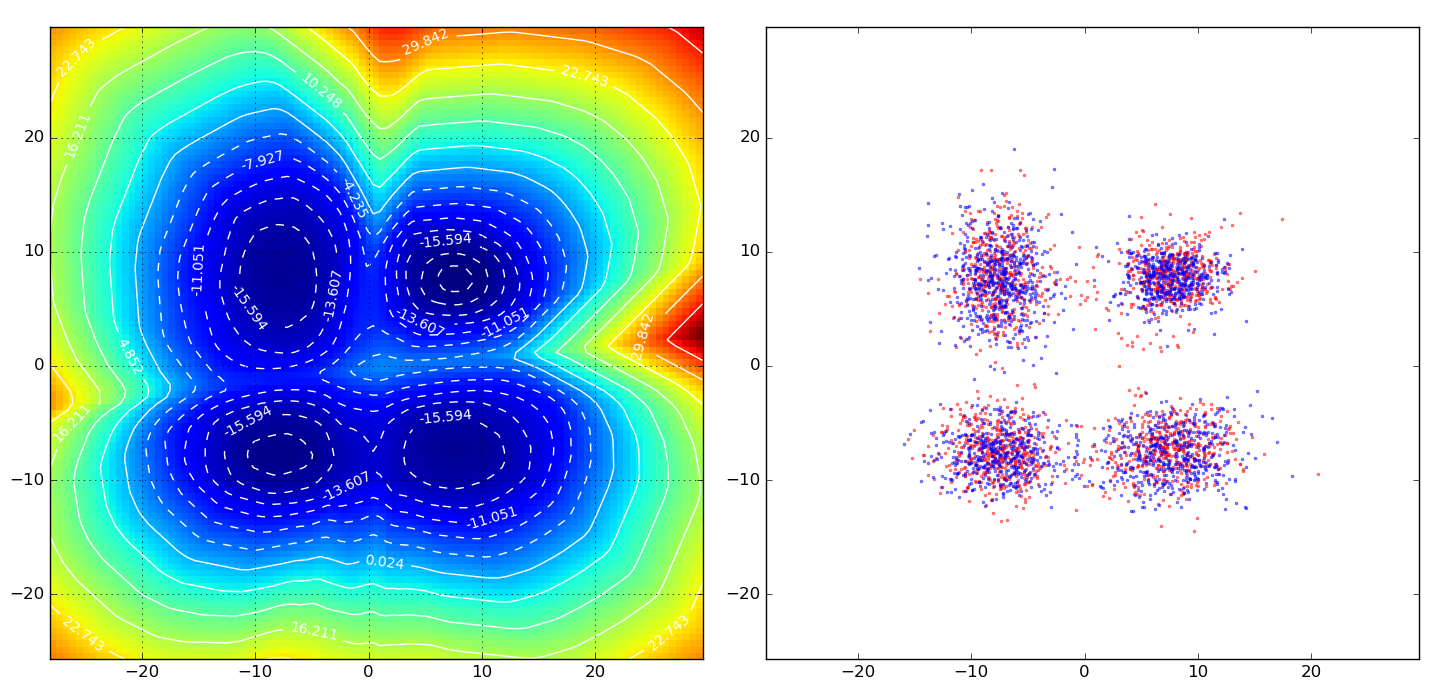}
		\includegraphics[width=0.32\textwidth]{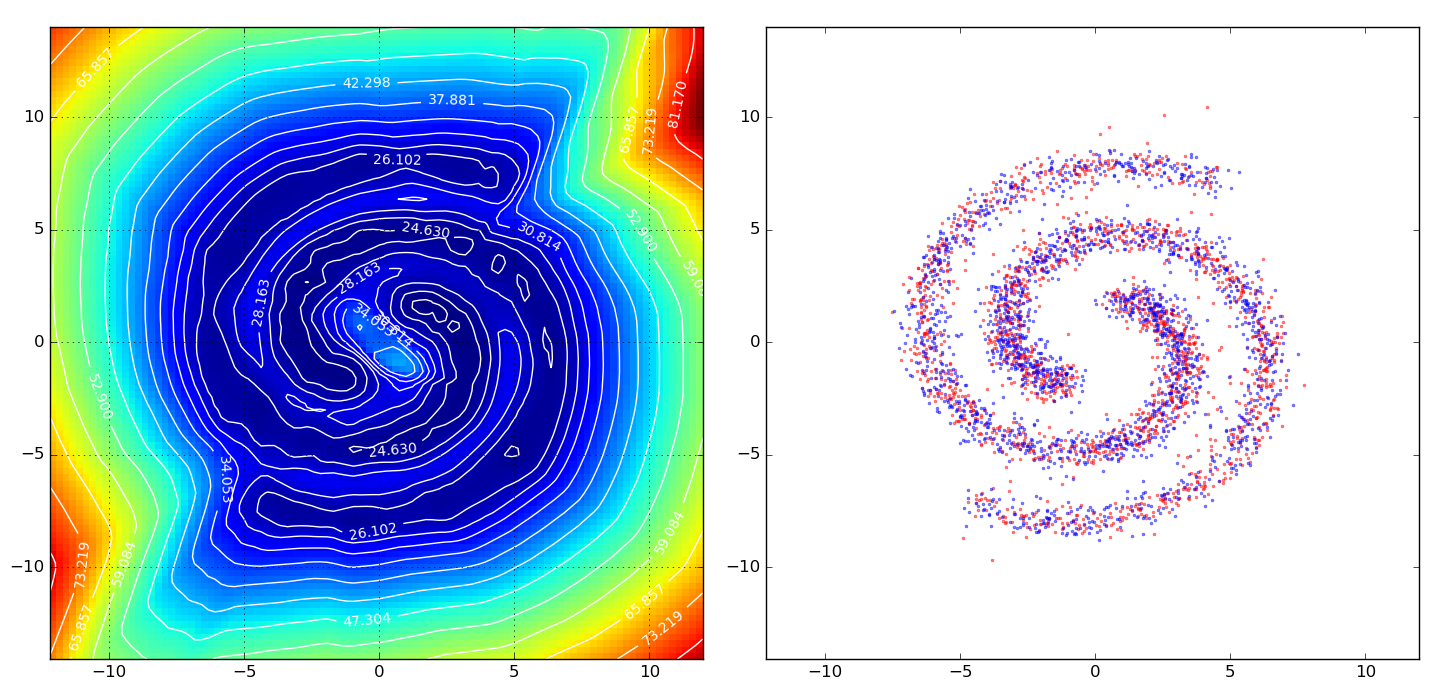}
		\includegraphics[width=0.32\textwidth]{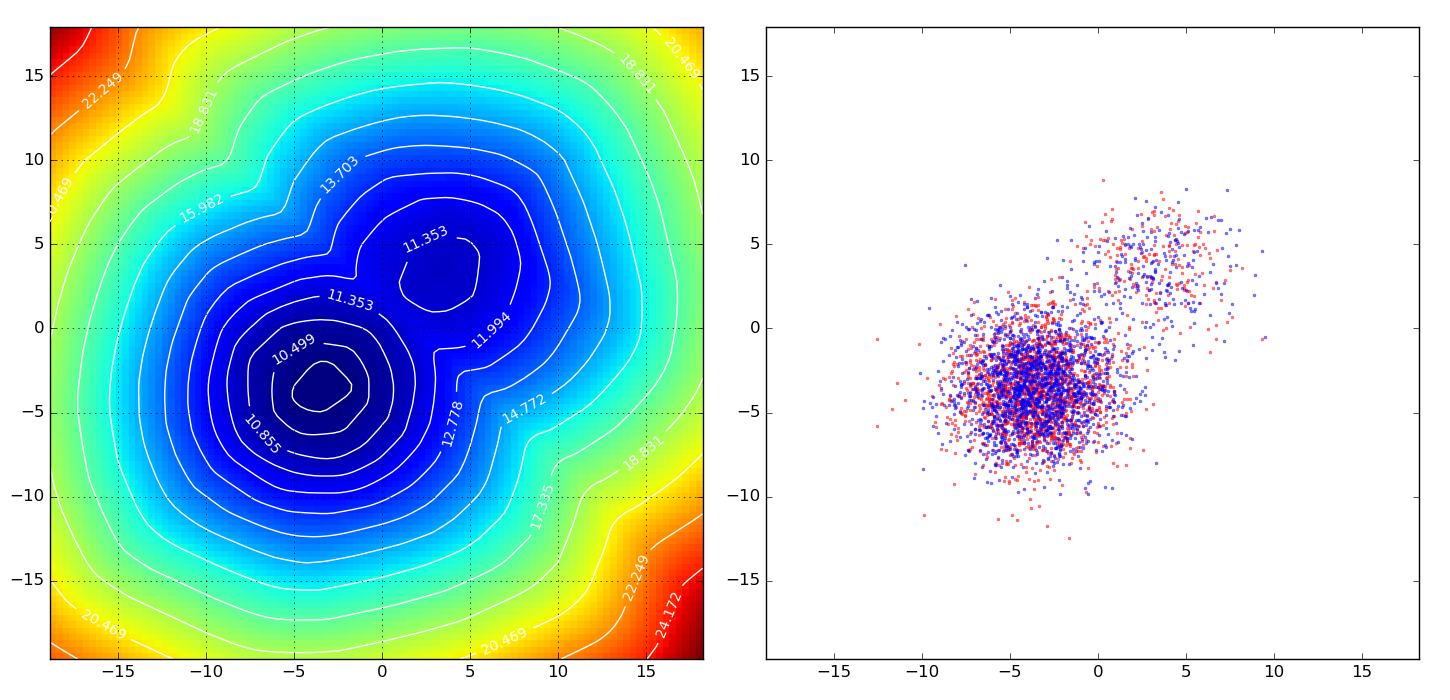}
        \caption{Entropy regularized Energy GAN with nearest neighbor approximation (EGAN-Ent-NN)}
	\end{subfigure}
    \caption{Learned energies and samples from proposed models whose discriminator can retain density information at the optimal. Blue dots are generated samples, and red dots are real ones.}
	\label{fig:synthetic_result_2}
\vspace{-0.6em}
\end{figure}

To better quantify these differences, we present detailed comparison based on KL divergence in appendix~\ref{sec:quantify_synthetic}.
What's more, the performance difference between EGAN-Ent-VI and EGAN-Ent-NN on biased Gaussian mixture reveals the limitations of the variational inference
based approximation, i.e. providing inaccurate gradients. Due to space consideratiosn, we refer interested readers to the appendix \ref{sec:vi_vs_knn} for a detailed discussion.
%reveals the training dynamics of our proposed formulation, and the limitations of the variational inference based approximation.

\subsection{Ranking NIST digits}
In this experiment, we verify that the results in synthetic datasets can translate into data with higher dimensions. 
While visualizing the learned energy function is not feasible in high-dimensional space, we can verify whether the learned energy function learns relative densities by inspecting the ranking of samples according to their assigned energies. 
We train on $28\times 28$ images of a single handwritten digit from the NIST dataset.~\footnote{\url{https://www.nist.gov/srd/nist-special-database-19}, which is an extended version of MNIST with an average of over $74$K examples per digit.} 
We compare the ability of EGAN-Ent-NN with both EGAN-Const and GAN on ranking a set of 1,000 images, half of which are generated samples and the rest are real test images. 
Figures~\ref{fig:nist_result_first} and~\ref{fig:nist_result_last} show the top-100 and bottom-100 ranked images respectively for each model, after training them on digit $1$. 
We also show in Figure~\ref{fig:mean_1} the mean of all training samples, so we can get a sense of what is the most common style (highest density) of digit 1 in NIST. 
We can notice that all of the top-ranked images by EGAN-Ent-NN look similar to the mean sample. 
In addition, the lowest-ranked images are clearly different from the mean image, with either high (clockwise or counter-clockwise) rotation degrees from the mean, or an extreme thickness level. We do not see such clear distinction in other models. We provide in the appendix \ref{sec:a-nist_experiment} the ranking of the full set of images.

\begin{figure}[t!]
  \begin{subfigure}[b]{\linewidth}
    \includegraphics[width=\linewidth]{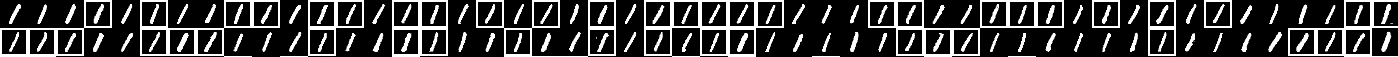}
    \caption{EGAN-Ent-NN}
    \label{fig:nist_egan_knn_first}
  \end{subfigure}
  \hfill
  \begin{subfigure}[b]{\linewidth}
    \includegraphics[width=\linewidth]{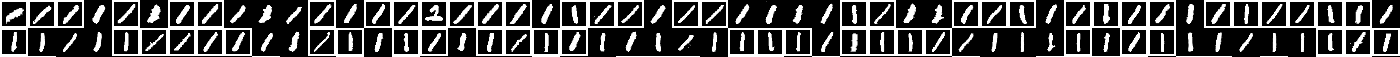}
    \caption{EGAN-Const}
    \label{fig:nist_egan_none_first}
  \end{subfigure}
  \hfill
  \begin{subfigure}[b]{\linewidth}
    \includegraphics[width=\linewidth]{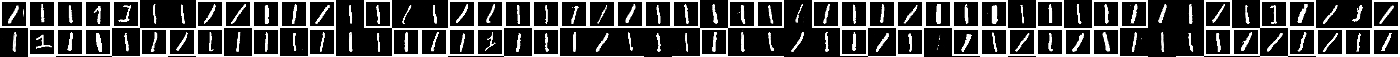}
    \caption{GAN}
    \label{fig:nist_gan_first}
  \end{subfigure}
  \caption{100 highest-ranked images out of 1000 generated and reals (bounding box) samples. }
 \label{fig:nist_result_first}
\vspace{-0.6em}
\end{figure}

\begin{figure}[t!]
  \begin{subfigure}[b]{\linewidth}
    \includegraphics[width=\linewidth]{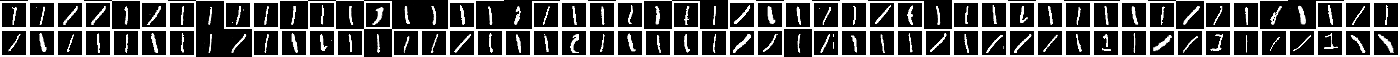}
    \caption{EGAN-Ent-NN}
    \label{fig:nist_egan_knn_last}
  \end{subfigure}
  \hfill
  \begin{subfigure}[b]{\linewidth}
    \includegraphics[width=\linewidth]{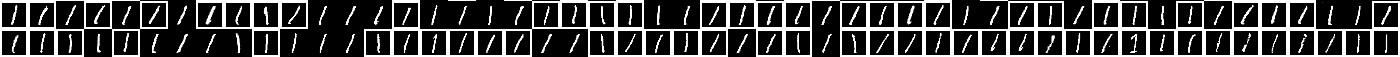}
    \caption{EGAN-Const}
    \label{fig:nist_egan_none_last}
  \end{subfigure}
  \begin{subfigure}[b]{\linewidth}
    \includegraphics[width=\linewidth]{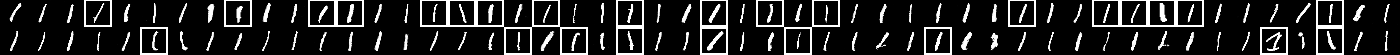}
    \caption{GAN}
    \label{fig:nist_gan_last}
  \end{subfigure}

  \caption{100 lowest-ranked images out of 1000 generated and reals (bounding box) samples. }
 \label{fig:nist_result_last}
\vspace{-0.6em}
\end{figure}

\begin{figure}[h]
  \begin{subfigure}[b]{0.45\textwidth}
    \includegraphics[width=\textwidth]{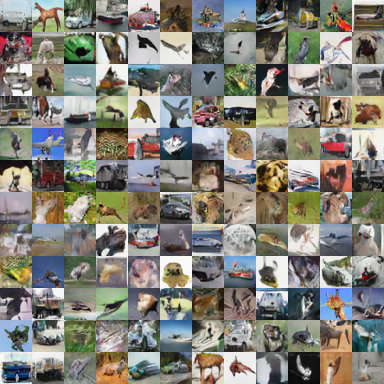}
    \caption{CIFAR-10}
    \label{fig:cifar10}
  \end{subfigure}
  \hfill
  \begin{subfigure}[b]{0.45\textwidth}
    \includegraphics[width=\textwidth]{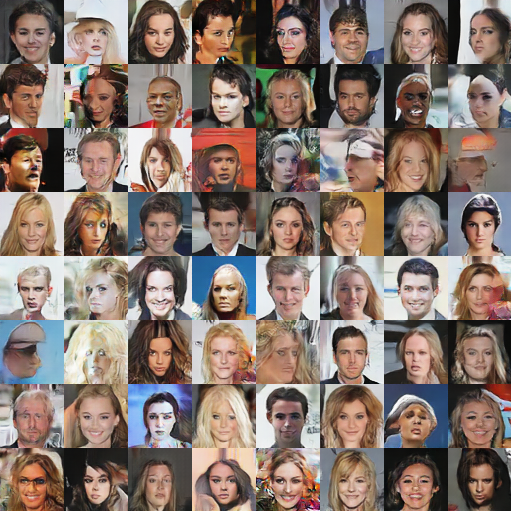}
    \caption{CelebA}
    \label{fig:celebA}
  \end{subfigure}
  \caption{Samples generated from our model.}
 \label{figure:cifar10_celebA}
\vspace{-0.6em}
\end{figure}

\subsection{Sample quality on natural image datasets}
In this last set of experiments, we evaluate the visual quality of samples generated by our model in two datasets of natural images, namely CIFAR-10 and CelebA. We employ here the variational-based approximation for entropy regularization, which can scale well to high-dimensional data. Figure~\ref{figure:cifar10_celebA} shows samples generated by EGAN-Ent-VI. We can see that despite the noisy gradients provided by the variational approximation, our model is able to generate high-quality samples.
\begin{figure}[t!]
\begin{floatrow}
\capbtabbox[9cm]{%
  \begin{tabular}{lccc}
    \toprule
    Model       & Our model &  Improved GAN$\dagger$& EGAN-Const\\
    \midrule
    Score $\pm$ std. & 7.07 $\pm$ .10  &  6.86 $\pm$ .06 & 6.7447 $\pm$ 0.09 \\
    \bottomrule
  \end{tabular}
}{
  \caption{Inception scores on CIFAR-10. $\dagger$ As reported in~\cite{salimans2016improved} without using labeled data.}
  \label{tab:cifar10-score}
}
\ffigbox{%
  \includegraphics[width=.12\linewidth]{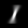}
}{%
  \caption{mean digit}%
  \label{fig:mean_1}
}
\end{floatrow}
\end{figure}

We futher validate the quality of our model's samples on CIFAR-10 using the \textit{Inception score} proposed by~\citep{salimans2016improved}~\footnote{Using the evaluation script released in \url{https://github.com/openai/improved-gan/}}. Table~\ref{tab:cifar10-score} shows the scores of our EGAN-Ent-VI, the best GAN model from~\cite{salimans2016improved} which uses only unlabeled data, and an EGAN-Const model which has the same architecture as our model. 
We notice that even without employing suggested techniques in~\cite{salimans2016improved}, energy-based models perform quite similarly to the GAN model. Furthermore, the fact that our model scores higher than EGAN-Const highlights the importance of entropy regularization in obtaining good quality samples.
% we might show samples in the appendix from the EGAN-Const model, to show that they are actually very decent but still have lower inception score.
%This score is defined as $\exp(E_x KL(p(y|\mathbf{x})||p(y)))$, where $\mathbf{x}$ is a (generated) image, and $y$ is the predicted label by the Inception model.

% \begin{table}[h]\ \centering
% \small
%   \begin{tabular}{lccc}
%     \toprule
%     Model       & Our model &  Improved GAN$\dagger$& EGAN-Const\\
%     \midrule
%     Score $\pm$ std. & 7.07 $\pm$ .10  &  6.86 $\pm$ .06 & 6.7447 $\pm$ 0.09 \\
%     \bottomrule
%   \end{tabular}
%   \caption{Inception scores on CIFAR-10. $\dagger$ Improved GAN score is the best score reported in~\cite{salimans2016improved} without using labeled data.}
%   \label{tab:cifar10-score}
% \end{table}
\section{Conclusion}
In this paper we have addressed a fundamental limitation in adversarial learning approaches, which is their inability of providing sensible energy estimates for samples. 
We proposed a novel adversarial learning formulation which results in a discriminator function that recovers the true data energy. 
We provided a rigorous characterization of the learned discriminator in the non-parametric setting, and proposed two methods for instantiating it in the typical parametric setting. 
Our experimental results verify our theoretical analysis about the discriminator properties, and show that we can also obtain samples of state-of-the-art quality.

\section{Acknowledgements}
We would like to thank the developers of Theano~\citep{2016arXiv160502688short} for developing such a powerful tool for scientific computing. Amjad Almahairi was supported by funding from Maluuba Research.

\newpage

\bibliography{ref}
\bibliographystyle{iclr2017_conference}

\newpage

\appendix

\section{Supplementary materials for Section \ref{sec:formulation}}
\subsection{Optimal discriminator form under the proposed formulation}
\label{sec:proof_for_formulation}

% optimal discriminator form via KKT 
\begin{proof}[Proof of proposition \ref{thm:general_optimal_disc}]
Refining the Lagrange $L(\pg, c)$ by introducing additional dual variables for the probability constraints (the second and third), the new Lagrange function has the form
\begin{equation}
\label{eq:lagrange_form}
L(\pg, c, \mu, \lambda) = K(\pg) + \sum_{x \in \mathcal{X}} c(x) \Big( \pg(x) - \pd(x) \Big) - \sum_{x \in \mathcal{X}} \mu(x)\pg(x)  + \lambda (\sum_{x \in \mathcal{X}} \pg(x) - 1)
\end{equation}
where $c(x) \in \R, \forall x$, $\mu(x) \in \R_{+},\forall x$, and $\lambda \in \R$ are the dual variables.
The KKT conditions for the optimal primal and dual variables are as follows
\begin{equation}
\begin{aligned}
	\frac{\partial K(\pg)}{\partial \pg(x)}\bigg|_{\pg=\pd} + c^*(x) - \mu^*(x) + \lambda^* = 0,& \quad\forall x & \text{(stationarity)} \\
	\mu^*(x) \pg^*(x) = 0, &\quad\forall x & \text{(complement slackness)} \\
    \mu^*(x) \geq 0, &\quad\forall x & \text{(dual feasibility)} \\
	\pg^*(x) \geq 0, \quad \pg^*(x) = \pd(x), &\quad\forall x & \text{(primal feasibility)} \\
	\sum_{x \in \mathcal{X}} \pg(x) = 1 && \text{(primal feasibility)}
\end{aligned}
\end{equation}
Rearranging the conditions above, we get $\pg^*(x) = \pd(x), \forall x \in \mathcal{X}$ as well as equation \eqref{eq:optimal_disc}, which concludes the proof.
\end{proof}

% EBGAN related
\subsection{Optimal conditions of EBGAN}
\label{sec:ebgan_proof}

In \citep{zhao2016energy}, the training objectives of the generator and the discriminator cannot be written as a single minimax optimization problem since the margin structure is only applied to the objective of the discriminator.
In addition, the discriminator is designed to produce the mean squared reconstruction error of an auto-encoder structure.
This restricted the range of the discriminator output to be non-negative, which is equivalent to posing a set constraint on the discriminator under the non-parametric setting.

Thus, to characterize the optimal generator and discriminator, we adapt the same analyzing logic used in the proof sketch of the original GAN~\citep{goodfellow2014generative}.
Specifically, given a specific generator distribution $\pg$, the optimal discriminator function given the generator distribution $c^*(x; \pg)$ can be derived by examining the objective of the discriminator.
Then, the conditional optimal discriminator function is substituted into the training objective of $\pg$, simplifying the ``adversarial'' training as a minimizing problem only w.r.t. $\pg$, which can be well analyzed.

Firstly, given any generator distribution $\pg$, the EBGAN training objective for the discriminator can be written as the following form
\begin{equation}
\label{eq:ebgan_objective_disc}
\begin{aligned}
c^*(x; \pg) 
&= \argmax_{c \in \mathcal{C}} \quad -\E_{\pg}\max(0, m - c(x)) - \E_{\pd} c(x) \\
&= \argmax_{c \in \mathcal{C}} \quad \E_{\pg}\min(0, c(x) - m) - \E_{\pd} c(x)
\end{aligned}
\end{equation}
where $\mathcal{C} = \{c: c(x) \geq 0, \forall x \in \mathcal{X}\}$ is the set of allowed non-negative discriminator functions.
Note this set constraint comes from the fact the mean squared reconstruction error as discussed above.

Since the problem \eqref{eq:ebgan_objective_disc} is independent w.r.t. each $x$, the optimal solution can be easily derived as
\begin{equation}
\label{eq:ebgan_optimal_c}
c^*(x; \pg) =
\begin{cases}
0,& \pg(x) < \pd(x) \\
m,& \pg(x) > \pd(x) \\
\alpha_x, & \pg(x) = \pd(x) > 0 \\
\beta_x, & \pg(x) = \pd(x) = 0
\end{cases}
\end{equation}
where $\alpha_x \in [0, m]$ is an under-determined number, a $\beta_x \in [0, \infty)$ is another under-determined non-negative real number, and the subscripts in $m, \alpha_x, \beta_x$ reflect that fact that these under-determined values can be distinct for different $x$.

This way, the overall training objective can be cast into a minimization problem w.r.t. $\pg$,
\begin{equation}
\label{eq:ebgan_objective_generator}
\begin{aligned}
\pg^* &= \argmin_{\pg \in \mathcal{P}} \, \E_{x \sim \pg} c^*(x; \pg) - \E_{x \sim \pd} c^*(x; \pg) \\
&= \argmin_{\pg \in \mathcal{P}} \, \sum_{x \in \mathcal{X}} \Big[\pg(x) - \pd(x)\Big] c^*(x; \pg)
\end{aligned}
\end{equation}
where the second term of the first line is implicitly defined as the problem is an adversarial game between $\pg$ and $c$.

\begin{proposition}
The global optimal of the EBGAN training objective is achieved if and only if $\pg = \pd$. At that point, $c^*(x)$ is fully under-determined.
\end{proposition}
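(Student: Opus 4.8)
The plan is to begin from the reduced generator objective already derived in~\eqref{eq:ebgan_objective_generator}, write $J(\pg) := \sum_{x \in \mathcal{X}} \big[\pg(x) - \pd(x)\big]\, c^*(x;\pg)$, and substitute the closed form of the conditional optimal discriminator~\eqref{eq:ebgan_optimal_c}. The first thing I would point out is that, although $c^*(\cdot\,;\pg)$ involves the under-determined values $\alpha_x$ and $\beta_x$, these occur only on the set $\{x : \pg(x) = \pd(x)\}$, where the multiplying coefficient $\pg(x)-\pd(x)$ is identically zero; hence $J(\pg)$ is in fact a well-defined function of $\pg$ alone. Partitioning $\mathcal{X}$ according to the four cases of~\eqref{eq:ebgan_optimal_c}: the region $\pg(x) < \pd(x)$ contributes nothing (there $c^* = 0$), the two equality regions contribute nothing, and only the region $\pg(x) > \pd(x)$ survives, so $J(\pg) = m \sum_{x:\, \pg(x) > \pd(x)} \big[\pg(x) - \pd(x)\big]$.

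Next I would invoke normalization: since $\pg,\pd \in \mathcal{P}$ we have $\sum_{x}\big[\pg(x)-\pd(x)\big] = 0$, which forces $\sum_{x:\,\pg(x)>\pd(x)}\big[\pg(x)-\pd(x)\big] = \sum_{x:\,\pg(x)<\pd(x)}\big[\pd(x)-\pg(x)\big] = \tfrac12\sum_{x\in\mathcal{X}}|\pg(x)-\pd(x)|$, the standard total-variation identity. Therefore $J(\pg) = \tfrac{m}{2}\,\|\pg-\pd\|_1$. Because the margin $m>0$, this quantity is nonnegative and vanishes if and only if $\pg=\pd$; this simultaneously establishes that $\pg=\pd$ attains the global minimum of the reduced objective and that it is the \emph{unique} minimizer, giving the ``if and only if''.

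For the second assertion I would simply evaluate~\eqref{eq:ebgan_optimal_c} at the optimum $\pg=\pd$: every $x$ then satisfies either $\pg(x)=\pd(x)>0$ or $\pg(x)=\pd(x)=0$, so $c^*(x)$ equals $\alpha_x\in[0,m]$ or $\beta_x\in[0,\infty)$ respectively, with each value choosable independently of the others. Hence $c^*$ is subject only to vacuous pointwise range restrictions and is otherwise completely unconstrained, i.e.\ fully under-determined; in particular it retains no relative density information, which mirrors the degenerate behaviour discussed in Section~\ref{sec:formulation}.

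I do not anticipate a genuine obstacle. The one place that needs care is the reduction step itself: one must make explicit that the under-determinacy of $c^*$ on $\{\pg=\pd\}$ does not render $J(\pg)$ ill-defined — it does not, precisely because those terms are killed by zero coefficients — and one must be careful to use $m>0$ so that the minimizer is unique rather than merely one among many.
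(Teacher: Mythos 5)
Your proposal is correct and follows essentially the same route as the paper's proof: substitute the conditional optimal discriminator into the reduced generator objective, observe that only the region $\pg(x) > \pd(x)$ contributes, and conclude that the objective is $m\sum_{x:\pg(x)>\pd(x)}[\pg(x)-\pd(x)]$, which is zero iff $\pg=\pd$. Your direct identification of this quantity with $\tfrac{m}{2}\|\pg-\pd\|_1$ (versus the paper's argument by contradiction) and your explicit handling of the $\beta_x$ case are minor polish, not a different argument.
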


\begin{proof}

The proof is established by showing contradiction. 

Firstly, assume the optimal $\pg^* \neq \pd$. 
Thus, there must exist a non-equal set $\mathcal{X}_{\neq} = \{x \mid \pd(x) \neq \pg^*(x) \}$, which can be further splitted into two subsets, 
the greater-than set $\mathcal{X}_{>} = \{x \mid \pg^*(x) > \pd(x)\}$,
and the less-than set $\mathcal{X}_{<} = \{x \mid \pg^*(x) < \pd(x)\}$.
Similarly, we define the equal set $\mathcal{X}_{=} = \{x : \pg^*(x) = \pd(x)\}$.
Obviously, $\mathcal{X}_{>} \bigcup \mathcal{X}_{<} \bigcup \mathcal{X}_{=} = \mathcal{X}$.

Let $L(\pg) = \sum_{x \in \mathcal{X}} \Big[\pg(x) - \pd(x)\Big] c^*(x; \pg)$, substituting the results from equation \eqref{eq:ebgan_optimal_c} into \eqref{eq:ebgan_objective_generator}, the $L(\pg)^*$ can be written as
\begin{equation}
\label{eq:val_g}
\begin{aligned}
L(\pg^*)
&= \sum_{x \in \mathcal{X}_{<} \bigcup \mathcal{X}_{<} \bigcup \mathcal{X}_{=}} \big[ \pg^*(x) - \pd(x) \big] c^*(x; \pg^*) \\
&= \sum_{x \in \mathcal{X}_{<}} \big[ \pg^*(x) - \pd(x) \big] c^*(x; \pg^*) + \sum_{x \in \mathcal{X}_{>}} \big[ \pg^*(x) - \pd(x) \big] c^*(x; \pg^*) \\
&= m\sum_{x \in \mathcal{X}_{>}} \pg^*(x) - \pd(x) \\
&> 0
\end{aligned}
\end{equation}

However, when $\pg^\prime = \pd$, we have
\begin{equation}
\label{eq:val_g_opt}
L(\pg^\prime) = 0 < L(\pg^*)
\end{equation}
which contradicts the optimal (miminum) assumption of $\pg^*$.
Hence, the contradiction concludes that at the global optimal, $\pg^* = \pd$.
By equation \eqref{eq:ebgan_optimal_c}, it directly follows that $c^*(x; \pg^*) = \alpha_x$,
which completes the proof.
\end{proof}

\subsection{Analysis of adding additional training signal to GAN formulation}
\label{sec:fgan_reg}
To show that simply adding the same training signal to GAN will not lead to the same result, it is more convenient to directly work with the formulation of $f$-GAN~\citep[equation (6)]{nowozin2016f} family, which include the original GAN formulation as a special case.

Specifically, the general $f$-GAN formulation takes the following form
\begin{equation}
\label{eq:fgan_formulation}
\max_{c} \min_{\pg \in \mathcal{P}} \quad
	\E_{x \sim \pg} \big[ f^\star(c(x)) \big] - 
    \E_{x \sim \pd} \big[ c(x) \big],
\end{equation}
where the $f^\star(\cdot)$ denotes the convex conjugate~\citep{boyd2004convex} of the $f$-divergence function.
The optimal condition of the discriminator can be found by taking the variation w.r.t. $c$, which gives the optimal discriminator 
\begin{equation}
\label{eq:optimal_fgan_disc}
c^*(x) = f^\prime(\frac{\pd(x)}{\pg(x)})
\end{equation}
where $f^\prime(\cdot)$ is the first-order derivative of $f(\cdot)$.
Note that, even when we add an extra term $L(\pg)$ to equation \eqref{eq:fgan_formulation}, since the term $K(\pg)$ is a constant w.r.t. the discriminator, it does not change the result given by equation \eqref{eq:optimal_fgan_disc} about the optimal discriminator. 
As a consequence, for the optimal discriminator to retain the density information, it effectively means $\pg \neq \pd$.
Hence, there will be a contradiction if both $c^*(x)$ retains the density information, and the generator matches the data distribution.

Intuitively, this problem roots in the fact that $f$-divergence is quite ``rigid'' in the sense that given the $\pg(x)$ it only allows one fixed point for the discriminator.
In comparison, the divergence used in our proposed formulation, which is the expected cost gap, is much more flexible.
By the expected cost gap itself, i.e. without the $K(\pg)$ term, the optimal discriminator is actually under-determined.

\section{Supplementary Materials for section \ref{sec:experiments}}

\subsection{Experiment setting}
\label{sec:experiment_setting}
Here, we specify the neural architectures used for experiements presented in Section \ref{sec:experiments}.

Firstly, for the Egan-Ent-VI model, we parameterize the approximate posterior distribution $\qg(z \mid x)$ with a diagonal Gaussian distribution, whose mean and covariance matrix are the output of a trainable inference network, i.e.
\begin{equation}
\begin{aligned}
\qg(z \mid x) &= \mathcal{N}(\mu, \mathbf{I}\sigma^2) \\
\mu, \log \sigma &= f^\text{infer}(x)
\end{aligned}
\end{equation}
where $f^\text{infer}$ denotes the inference network, and $\mathbf{I}$ is the identity matrix.
Note that the Inference Network only appears in the Egan-Ent-VI model.

For experiments with the synthetic datasets, the following fully-connected feed forward neural networks are employed 
{\footnotesize
\begin{itemize}[leftmargin=16pt,labelindent=16pt]
\item Generator: \texttt{FC(4,128)-BN-ReLU-FC(128,128)-BN-ReLU-FC(128,2)}
\item Discriminator: \texttt{FC(2,128)-ReLU-FC(128,128)-ReLU-FC(128,1)}
\item Inference Net: \texttt{FC(2,128)-ReLU-FC(128,128)-ReLU-FC(128,4*2)}
\end{itemize}}
where \texttt{FC} and \texttt{BN} denote fully-connected layer and batch normalization layer respectively.
Note that since the input noise to the generator has dimension \texttt{4}, the Inference Net output has dimension \texttt{4*2}, where the first 4 elements correspond the inferred mean, and the last 4 elements correspond to the inferred diagonal covariance matrix in log scale.

For the handwritten digit experiment, we closely follow the DCGAN~\citep{radford2015unsupervised} architecture with the following configuration
{\footnotesize
\begin{itemize}[leftmargin=16pt,labelindent=16pt]
\item Generator: \texttt{FC(10,512*7*7)-BN-ReLU-DC(512,256;4c2s)-BN-ReLU}\\ 
\texttt{-DC(256,128;4c2s)-BN-ReLU-DC(128,1;3c1s)-Sigmoid}
\item Discriminator: \texttt{CV(1,64;3c1s)-BN-LRec-CV(64,128;4c2s)-BN-LRec}\\ 
\texttt{-CV(128,256;4c2s)-BN-LRec-FC(256*7*7,1)}
\item Inference Net: \texttt{CV(1,64;3c1s)-BN-LRec-CV(64,128;4c2s)-BN-LRec}\\ 
\texttt{-CV(128,256;4c2s)-BN-LRec-FC(256*7*7,10*2)}
\end{itemize}}
Here, \texttt{LRec} is the leaky rectified non-linearity recommended by \citet{radford2015unsupervised}. 
In addition, \texttt{CV(128,256,4c2s)} denotes a convolutional layer with 128 input channels, 256 output channels, and kernel size 4 with stride 2. Similarly, \texttt{DC(256,128,4c2s)} denotes a corresponding transposed convolutional operation. 
Compared to the original DCGAN architecture, the discriminator under our formulation does not have the last sigmoid layer which squashes a scalar value into a probability in [0, 1]. 

For celebA experiment with $64 \times 64$ color images, we use the following architecture 
{\footnotesize
\begin{itemize}[leftmargin=16pt,labelindent=16pt]
\item Generator: \texttt{FC(10,512*4*4)-BN-ReLU-DC(512,256;4c2s)-BN-ReLU-DC(256,128;4c2s)}\\ 
\texttt{-BN-ReLU-DC(256,128;4c2s)-BN-ReLU-DC(128,3;4c2s)-Tanh}
\item Discriminator: \texttt{CV(3,64;4c2s)-BN-LRec-CV(64,128;4c2s)-BN-LRec-CV(128,256;4c2s)}\\ 
\texttt{-BN-LRec-CV(256,256;4c2s)-BN-LRec-FC(256*4*4,1)}
\item Inference Net: \texttt{CV(3,64;4c2s)-BN-LRec-CV(64,128;4c2s)-BN-LRec-CV(128,256;4c2s)}\\ 
\texttt{-BN-LRec-CV(256,256;4c2s)-BN-LRec-FC(256*4*4,10*2)}
\end{itemize}}
For Cifar10 experiment, where the image size is $32 \times 32$, similar architecture is used
{\footnotesize
\begin{itemize}[leftmargin=16pt,labelindent=16pt]
\item Generator: \texttt{FC(10,512*4*4)-BN-ReLU-DC(512,256;4c2s)-BN-ReLU-DC(256,128;3c1s)}\\ 
\texttt{-BN-ReLU-DC(256,128;4c2s)-BN-ReLU-DC(128,3;4c2s)-Tanh}
\item Discriminator: \texttt{CV(3,64;3c1s)-BN-LRec-CV(64,128;4c2s)-BN-LRec-CV(128,256;4c2s)}\\ 
\texttt{-BN-LRec-CV(256,256;4c2s)-BN-LRec-FC(256*4*4,1)}
\item Inference Net: \texttt{CV(3,64;3c1s)-BN-LRec-CV(64,128;4c2s)-BN-LRec-CV(128,256;4c2s)}\\ 
\texttt{-BN-LRec-CV(256,256;4c2s)-BN-LRec-FC(256*4*4,10*2)}
\end{itemize}}

Given the chosen architectures, we follow \citet{radford2015unsupervised} and use Adam as the optimization algorithm.
For more detailed hyper-parameters, please refer to the code.

\subsection{Quantitative comparison of different models}
\label{sec:quantify_synthetic}
\setlength\tabcolsep{3pt}
\begin{table}[h]
\scriptsize
\centering
\begin{tabular}{l|cccc|cccc|cc}
\toprule
& \multicolumn{10}{c}{Gaussian Mixture: \quad $\text{KL}(\pd \| p_\text{emp}) = 0.0291$, \quad $\text{KL}(p_\text{emp} \| \pd) = 0.0159$} \\
\midrule
KL Divergence & $\pg \| p_\text{emp}$ & $p_\text{emp} \| \pg$ & $\pg \| \pd$ & $\pd \| \pg$ & $p_\text{disc} \| p_\text{emp}$ & $p_\text{emp} \| p_\text{disc}$ & $p_\text{disc} \| \pd$ & $\pd \| p_\text{disc}$ & $\pg \| p_\text{disc}$ & $p_\text{disc} \| \pg$ \\
\midrule
GAN         & 0.3034 & 0.5024 & 0.2498 & 0.4807 & 6.7587 & 2.0648 & 6.2020 & 2.0553 & 2.4596 & 7.0895 \\
EGAN-Const  & 0.2711 & 0.4888 & 0.2239 & 0.4735 & 6.7916 & 2.1243 & 6.2159 & 2.1149 & 2.5062 & 7.0553 \\
EGAN-Ent-VI & 0.1422 & 0.1367 & 0.0896 & 0.1214 & 0.8866 & 0.6532 & 0.7215 & 0.6442 & 0.7711 & 1.0638 \\
EGAN-Ent-NN & \textbf{0.1131} & \textbf{0.1006} & \textbf{0.0621} & \textbf{0.0862} & \textbf{0.0993} & \textbf{0.1356} & \textbf{0.0901}   & \textbf{0.1187} & \textbf{0.1905} & \textbf{0.1208} \\
\bottomrule \toprule
& \multicolumn{10}{c}{Biased Gaussian Mixture: \quad $\text{KL}(\pd \| p_\text{emp}) = 0.0273$, \quad $\text{KL}(p_\text{emp} \| \pd) = 0.0144$} \\
\midrule
KL Divergence & $\pg \| p_\text{emp}$ & $p_\text{emp} \| \pg$ & $\pg \| \pd$ & $\pd \| \pg$ & $p_\text{disc} \| p_\text{emp}$ & $p_\text{emp} \| p_\text{disc}$ & $p_\text{disc} \| \pd$ & $\pd \| p_\text{disc}$ & $\pg \| p_\text{disc}$ & $p_\text{disc} \| \pg$ \\
\midrule
GAN         & 0.0788 & 0.0705 & 0.0413 & 0.0547 & 7.1539 & 2.5230 & 6.4927 & 2.5018 & 2.5205 & 7.1140 \\
EGAN-Const  & 0.1545 & 0.1649 & 0.1211 & 0.1519 & 7.1568 & 2.5269 & 6.4969 & 2.5057 & 2.5860 & 7.1995 \\
EGAN-Ent-VI & \textbf{0.0576} & 0.0668 & \textbf{0.0303} & 0.0518 & 3.9151 & 1.3574 & 2.9894 & 1.3365 & 1.4052 & 4.0632 \\
EGAN-Ent-NN & 0.0784 & \textbf{0.0574} & 0.0334 & \textbf{0.0422} & \textbf{0.8505} & \textbf{0.3480} & \textbf{0.5199} & \textbf{0.3299} & \textbf{0.3250} & \textbf{0.7835} \\
\bottomrule\toprule
&\multicolumn{10}{c}{Two-spiral Gaussian Mixture: \quad $\text{KL}(\pd \| p_\text{emp}) = 0.3892$, \quad $\text{KL}(p_\text{emp} \| \pd) = 1.2349$} \\
\midrule
KL Divergence & $\pg \| p_\text{emp}$ & $p_\text{emp} \| \pg$ & $\pg \| \pd$ & $\pd \| \pg$ & $p_\text{disc} \| p_\text{emp}$ & $p_\text{emp} \| p_\text{disc}$ & $p_\text{disc} \| \pd$ & $\pd \| p_\text{disc}$ & $\pg \| p_\text{disc}$ & $p_\text{disc} \| \pg$ \\
\midrule
GAN         & 0.5297 & 0.2701 & 0.3758 & 0.7240 & 6.3507 & 1.7180 & 4.3818 & 1.0866 & 1.6519 & 5.7694 \\
EGAN-Const  & 0.7473 & 1.0325 & 0.7152 & 1.6703 & 5.9930 & 1.5732 & 3.9749 & 0.9703 & 1.8380 & 6.0471 \\
EGAN-Ent-VI & 0.2014 & 0.1260 & \textbf{0.4283} & \textbf{0.8399} & 1.1099 & 0.3508 & \textbf{0.3061} & \textbf{0.4037} & 0.4324 & 0.9917 \\
EGAN-Ent-NN &\textbf{ 0.1246} & \textbf{0.1147} & 0.4475 & 1.2435 & \textbf{0.1036} & \textbf{0.0857} & 0.4086 & 0.7917 & \textbf{0.1365} & \textbf{0.1686} \\
\bottomrule
\end{tabular}
\caption{Pairwise KL divergence between distributions. Bold face indicate the lowest divergence within group.}
\label{tab:kl}
\end{table}
\setlength\tabcolsep{6pt}

In order to quantify the quality of recovered distributions, we compute the pairwise KL divergence of the following four distributions:
\begin{itemize}
\item The real data distribution with analytic form, denoted as $\pd$
\item The empirical data distribution approximated from the 100K training data, denoted as $p_\text{emp}$
\item The generator distribution approximated from 100K generated data, denoted as $\pg$
\item The discriminator distribution re-normalized from the learned energy, denoted as $p_\text{disc}$
\end{itemize}
Since the synthetic datasets are two dimensional, we approximate both the empirical data distribution and the generator distribution using the simple histogram estimation.
Specifically, we divide the canvas into a 100-by-100 grid, and assign each sample into its nearest grid cell based on euclidean distance.
Then, we normalize the number of samples in each cell into a proper distribution.
When recovering the discriminator distribution from the learned energy, we assume that $\mu^*(x) = 0$ (i.e. infinite data support), and discretize the distribution into the same grid cells
\begin{equation*}
p_\text{disc}(x) = \frac{\exp(-c^*(x))}{\sum_{x^\prime \in \text{Grid}} \exp(-c^*(x^\prime))}, \forall x \in \text{Grid}
\end{equation*}
Based on these approximation, Table \ref{tab:kl} summarizes the results.
For all measures related to the discriminator distribution, EGAN-Ent-VI and EGAN-Ent-NN significantly outperform the other two baseline models, which matches our visual assessment in Figure \ref{fig:synthetic_result_1} and \ref{fig:synthetic_result_2}.
Meanwhile, the generator distributions learned from our proposed framework also achieve relatively lower divergence to both the empirical data distribution and the true data distribution.

\subsection{Comparison of the entropy (gradient) approximation methods}
\label{sec:vi_vs_knn}

In order to understand the performance difference between EGAN-Ent-VI and EGAN-Ent-NN, we analyze the quality of the entropy gradient approximation during training.
To do that, we visualize some detailed training information in Figure \ref{fig:entropy_gradient_analysis}.
\begin{figure}[h]
	\centering
	\begin{subfigure}{\textwidth}
		\includegraphics[width=\textwidth]{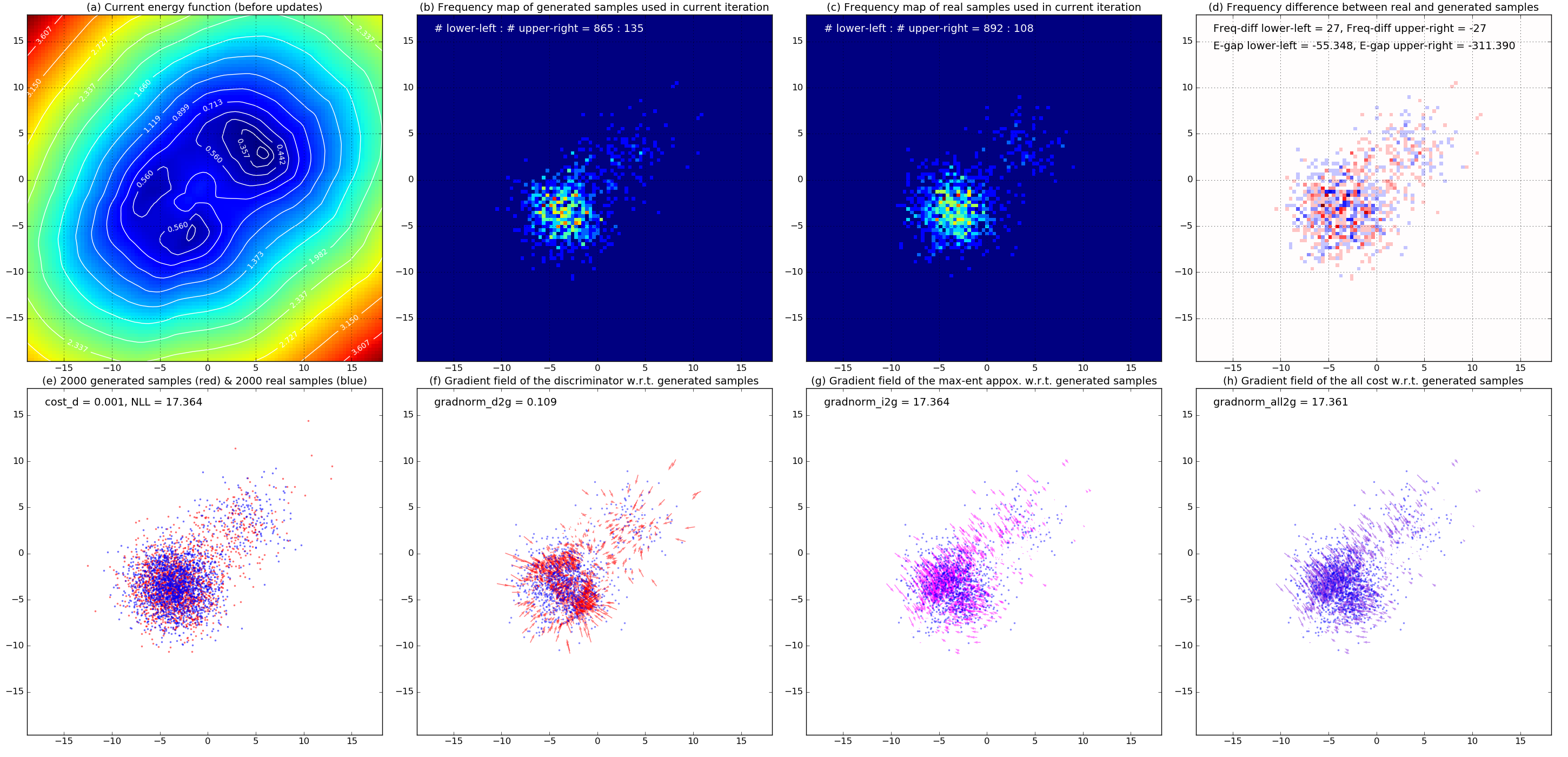}
        \caption{Training details under variational inference entropy approximation}
        \label{fig:entropy_gradient_vi}
	\end{subfigure}
    \begin{subfigure}{\textwidth}
		\includegraphics[width=\textwidth]{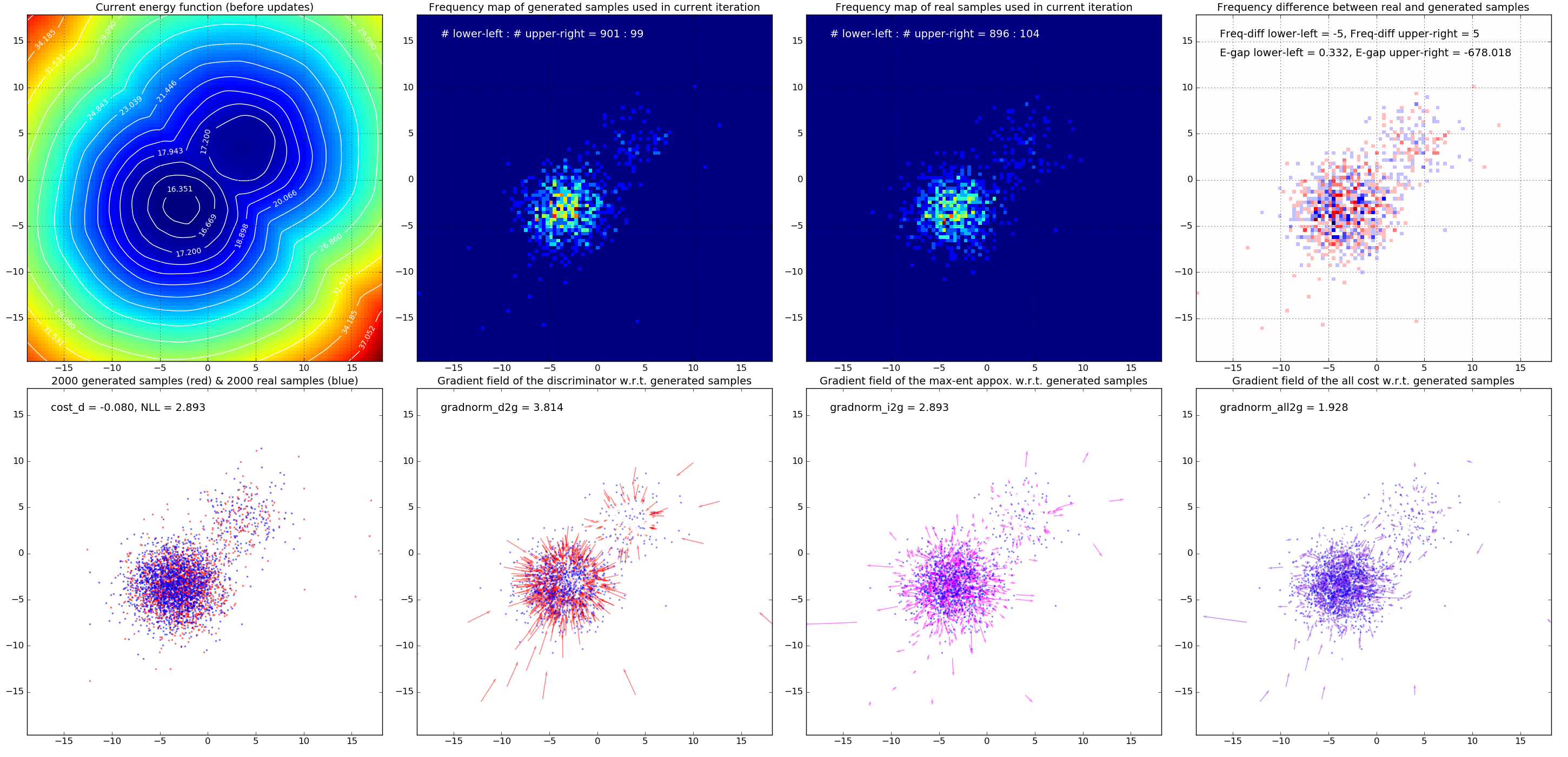}
        \caption{Training details under nearest neighbor entropy approximation}
        \label{fig:entropy_gradient_knn}
	\end{subfigure}
	\caption{For convenience, we will use Fig. (i,j) to refer to the subplot in row i, column j. Fig. (1,1): current energy plot. Fig. (1,2): frequency map of generated samples in the current batch. Fig. (1,3): frequency map of real samples in the current batch. Fig-(1,4): frequency difference between real and generated samples. Fig. (2,1) comparison between more generated from current model and real sample. Fig. (2,2): the discriminator gradient w.r.t. each training sample. Fig. (2,3): the entropy gradient w.r.t. each training samples. Fig. (2,4): all gradient (discriminator + entropy) w.r.t. each training sample.}
	\label{fig:entropy_gradient_analysis}
\end{figure}

As we can see in figure \ref{fig:entropy_gradient_vi}, the viarational entropy gradient approximation w.r.t. samples is not accurate:
\begin{itemize}
\item It is inaccurate in terms of gradient direction. 
Ideally, the direction of the entropy gradient should be pointing from the center of its closest mode towards the surroundings, with the direction orthogonal to the implicit contour in Fig. (1,2).
However, the direction of gradients in the Fig. (2,3) does not match this. 

\item It is inaccurate in magnitude. 
As we can see, the entropy approximation gradient (Fig. (2,3)) has much larger norm than the discriminator gradient (Fig. (2,2)). 
As a result, the total gradient (Fig. (2,4)) is fully dominated by the entropy approximation gradient.
Thus, it usually takes much longer for the generator to learn to generate rare samples, and the training also proceeds much slower compared to the nearest neighbor based approximation.
\end{itemize}

In comparison, the nearest neighbor based gradient approximation is much more accurate as shown in \ref{fig:entropy_gradient_knn}.
As a result, it leads to more accurate energy contour, as well as faster training.
What's more, from Figure \ref{fig:entropy_gradient_knn} Fig. (2,4), we can see the entropy gradient does have the cancel-out effect on the discriminator gradient, which again matches our theory.

\subsection{Ranking NIST Digits}
\label{sec:a-nist_experiment}
Figure~\ref{fig:nist_result_all} shows the ranking of all 1000 generated and real images (from the test set) for three models: EGAN-Ent-NN, EGAN-Const, and GAN. We can clearly notice that in EGAN-Ent-NN the top-ranked digits look very similar to the mean digit. From the upper-left corner to the lower-right corner, the transition trend is: the rotation degree increases, and the digits become increasingly thick or thin compared to the mean. 
In addition, samples in the last few rows do diverge away from the mean image: either highly diagonal to the right or left, or have different shape: very thin or thick, or typewriter script. Other models are not able to achieve a similar clear distinction for high versus low probability images.
Finally, we consistently observe the same trend in modeling other digits, which are not shown in this paper due to space constraint.

\begin{figure}[]
  \begin{subfigure}[b]{\linewidth}
    \includegraphics[width=\linewidth]{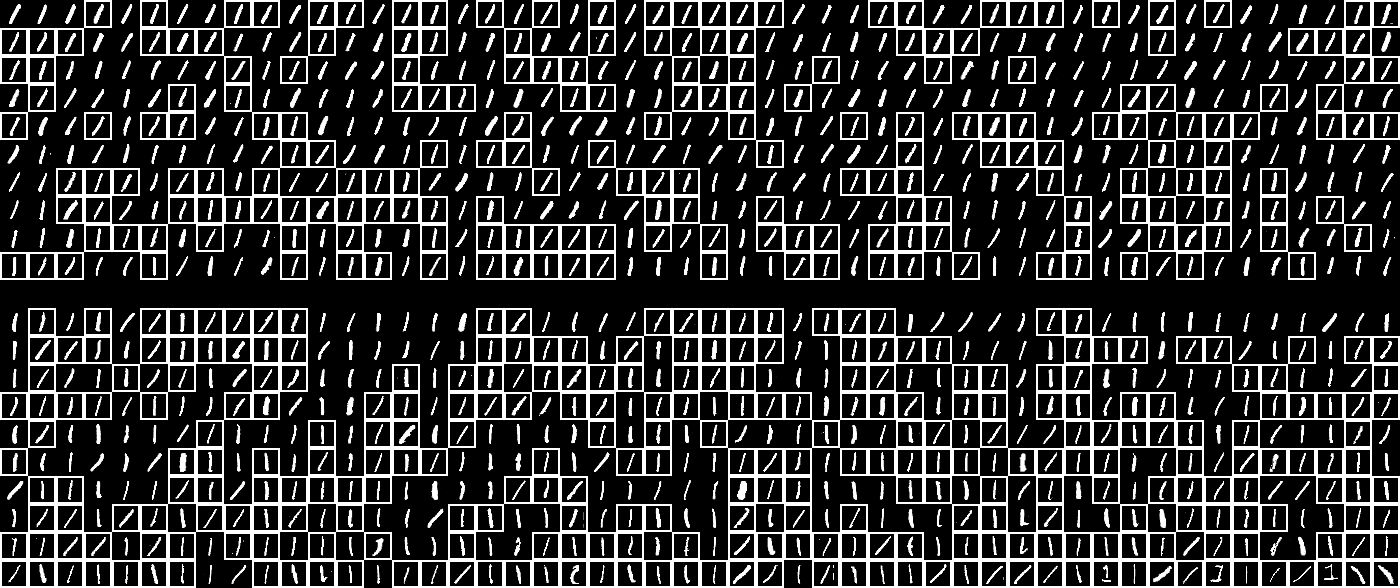}
    \caption{EGAN-Ent-NN}
    \label{fig:nist_egan_knn_all}
  \end{subfigure}
  \hfill
  \begin{subfigure}[b]{\linewidth}
    \includegraphics[width=\linewidth]{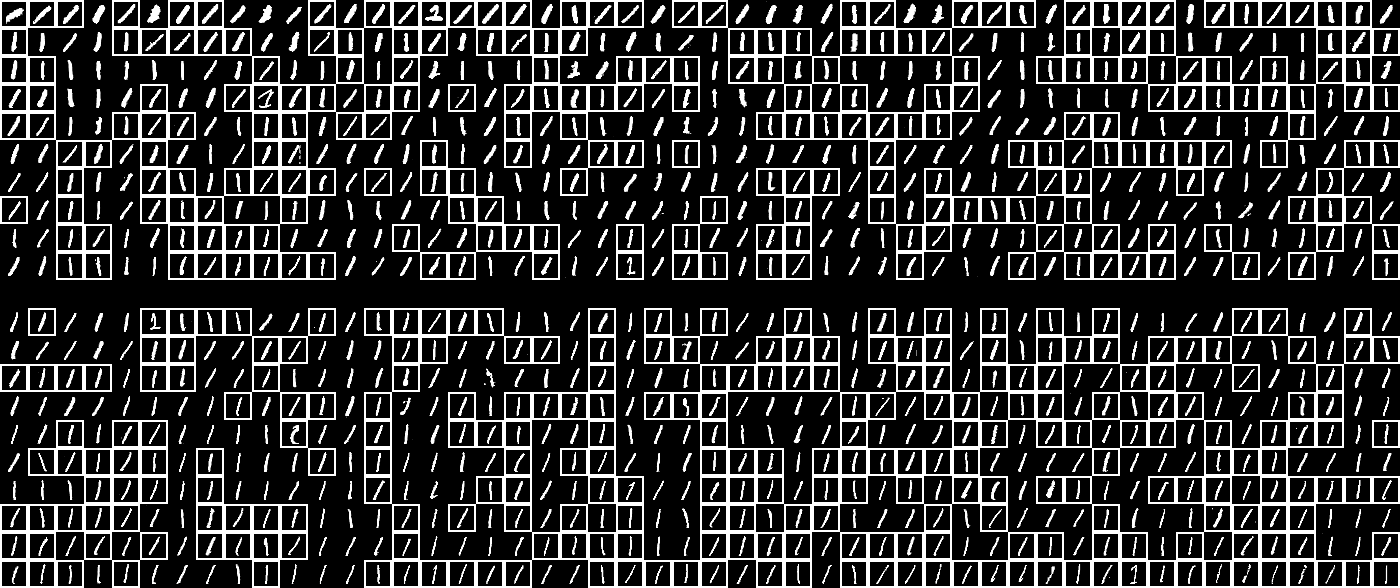}
    \caption{EGAN-Const}
    \label{fig:nist_egan_none_all}
  \end{subfigure}
  \hfill
  \begin{subfigure}[b]{\linewidth}
    \includegraphics[width=\linewidth]{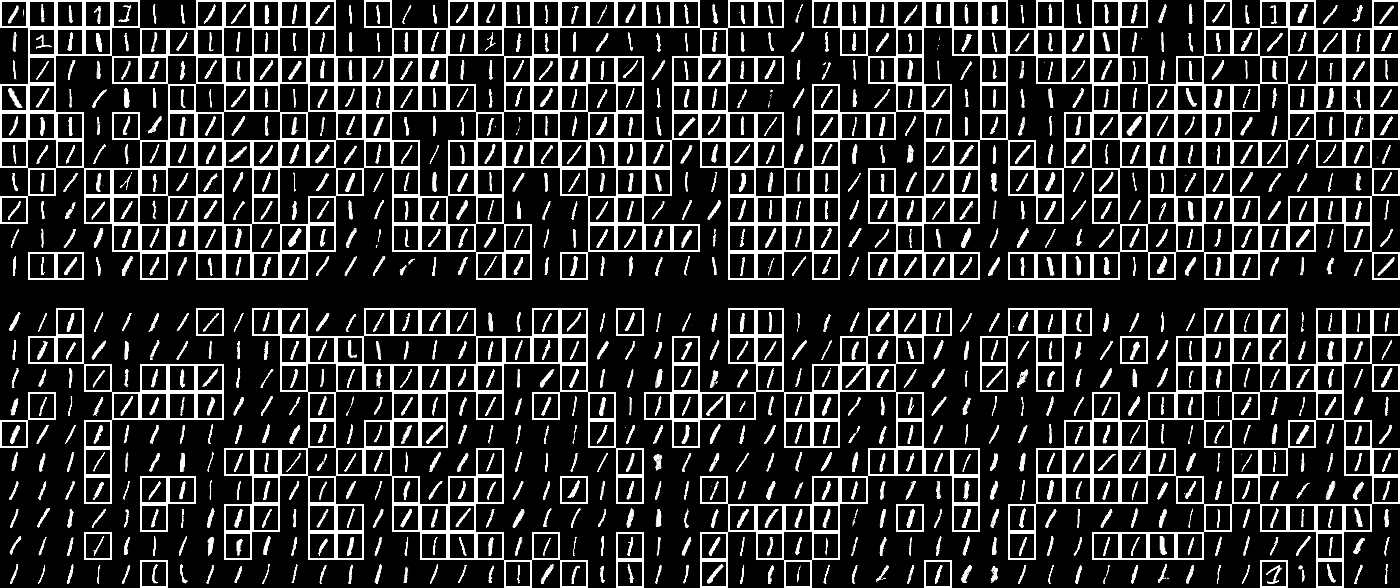}
    \caption{GAN}
    \label{fig:nist_gan_all}
  \end{subfigure}
  \caption{1000 generated and test images (bounding box) ranked according their assigned energies.}
 \label{fig:nist_result_all}
\end{figure}

\subsection{Classifier performance as a proxy measure}
As mentioned in Section \ref{sec:experiments}, evaluating the proposed formulation quantitatively on high-dimensional data is extremely challenging.
Here, in order to provide more quantitative intuitions on the learned discriminator at convergence, we adopt a proxy measure.
Specifically, we take the last-layer activation of the converged discriminator network as \textbf{fixed} pretrained feature, and build a linear classifier upon it. 
Hypothetically, if the discriminator does not degenerate, the extracted last-layer feature should maintain more information about the data points, especially compared to features from degenerated discriminators.
Following this idea, we first train EGAN-Ent-NN, EGAN-Const, and GAN on the MNIST till convergence, and then extract the last-layer activation from their discriminator networks as fixed feature input.
Based on fixed feature, a randomly initialized linear classifier is trained to do classification on MNIST.
Based on 10 runs (with different initialization) of each of the three models, the test classification performance is summarized in Table \ref{tab:proxy_fixed}.
For comparison purpose, we also include a baseline where the input features are extracted from a discriminator network with random weights.

\begin{table}[h!]
\centering
\begin{tabular}{c | c | c | c | c } 
\toprule
Test error (\%) & EGAN-Ent-NN & EGAN-Const & GAN & Random \\
\midrule
Min  & \textbf{1.160} & 1.280 & 1.220 & 3.260 \\
Mean & \textbf{1.190} & 1.338 & 1.259 & 3.409 \\ 
Std. & 0.024          & 0.044 & 0.032 & 0.124 \\
\bottomrule
\end{tabular}
\caption{Test performance of linear classifiers based on last-layer discriminator features.}
\label{tab:proxy_fixed}
\end{table}

Based on the proxy measure, EGAN-Ent-NN seems to maintain more information of data, which suggests that the discriminator from our proposed formulation is more informative.
Despite the positive result, it is important to point out that maintaining information about categories does not necessarily mean maintaining information about the energy (density).
Thus, this proxy measure should be understood cautiously.

\end{document}